\documentclass{article}

% if you need to pass options to natbib, use, e.g.:
%     \PassOptionsToPackage{numbers, compress}{natbib}
% before loading neurips_2025

% ready for submission
\usepackage[final]{neurips_2025}

% to compile a preprint version, e.g., for submission to arXiv, add add the
% [preprint] option:
%     \usepackage[preprint]{neurips_2025}

% to compile a camera-ready version, add the [final] option, e.g.:
%     \usepackage[final]{neurips_2025}

% to avoid loading the natbib package, add option nonatbib:
%    \usepackage[nonatbib]{neurips_2025}

\usepackage[utf8]{inputenc} % allow utf-8 input
\usepackage[T1]{fontenc}    % use 8-bit T1 fonts
\usepackage{hyperref}       % hyperlinks
\usepackage{url}            % simple URL typesetting
\usepackage{booktabs}       % professional-quality tables
\usepackage{amsfonts}       % blackboard math symbols
\usepackage{nicefrac}       % compact symbols for 1/2, etc.
\usepackage{microtype}      % microtypography
\usepackage[usenames,dvipsnames]{xcolor}

\usepackage{amsmath}
\usepackage{amsthm}
\usepackage{amssymb}
\usepackage{subcaption}
\usepackage{booktabs}   % for \toprule, \midrule, \bottomrule
\usepackage{multirow}   % for \multirow

\usepackage{pgfplots}
\usepackage{tikz}

\usepackage{listings}
% Define custom style for listings
\lstdefinestyle{modelStyle}{
    backgroundcolor=\color{white},
    basicstyle=\ttfamily\footnotesize,
    breaklines=true,
    frame=single,
    rulecolor=\color{black},
    keywordstyle=\color{blue},
    commentstyle=\color{gray},
    stringstyle=\color{red},
    numbers=left,
    numberstyle=\tiny\color{gray},
    captionpos=b,
    language=Python
}
\lstset{style=modelStyle}

\pgfplotsset{compat=1.15}
\usepgfplotslibrary{fillbetween}
\usetikzlibrary{calc}
\usetikzlibrary{colorbrewer}
\usetikzlibrary{overlay-beamer-styles}
\usepgfplotslibrary{colorbrewer}

\usepackage{filecontents}
\usepackage{subcaption}
\usepackage{bm}
\usepackage{xspace}
\usepackage{algcompatible}
\usepackage[ruled,vlined]{algorithm2e}
\usepackage[textsize=scriptsize]{todonotes}
\usepackage{wrapfig}
\usepackage{nicefrac}
\usepackage{enumitem}
\usepackage[normalem]{ulem}
\usepackage{tcolorbox}

\SetKwInput{KwInput}{Input}                % Set the Input
\SetKwInput{KwOutput}{Output}

\usepackage{amsmath}

\newcommand{\sctd}[0]{\texttt{SCTD}\xspace}

\newcommand{\sat}[0]{\texttt{SAT}\xspace}

\newcommand{\msp}[0]{\texttt{MSP}\xspace}
\newcommand{\sn}[0]{\texttt{SN}\xspace}
\newcommand{\dg}[0]{\texttt{DG}\xspace}

\newcommand{\de}[0]{\texttt{DE}\xspace}

\newcommand{\upperbound}[0]{$\overline{\text{acc}}(a_\text{full},c)$}
\newcommand{\realtradeoff}[0]{$\text{acc}_c(h,g)$}

\makeatletter
\newcommand{\dashover}[2][\mathop]{#1{\mathpalette\df@over{{\dashfill}{#2}}}}
\newcommand{\dotover}[2][\mathop]{#1{\mathpalette\df@over{{\dottedfill}{#2}}}}
\newcommand{\fillover}[2][\mathop]{#1{\mathpalette\df@over{{\solidfill}{#2}}}}
\newcommand{\df@over}[2]{\df@@over#1#2}
\newcommand\df@@over[3]{%
  \vbox{
    \offinterlineskip
    \ialign{##\cr
      #2{#1}\cr
      \noalign{\kern1pt}
      $\m@th#1#3$\cr
    }
  }%
}
\newcommand{\dashfill}[1]{%
  \kern-.5pt
  \xleaders\hbox{\kern.5pt\vrule height.4pt width \dash@width{#1}\kern.5pt}\hfill
  \kern-.5pt
}
\newcommand{\dottedfill}[1]{%
  \kern-.5pt
  \xleaders\hbox{\kern.5pt\vrule height.4pt width \dot@width{#1}\kern.5pt}\hfill
  \kern-.5pt
}
\newcommand{\dash@width}[1]{%
  \ifx#1\displaystyle
    2pt
  \else
    \ifx#1\textstyle
      1.5pt
    \else
      \ifx#1\scriptstyle
        1.25pt
      \else
        \ifx#1\scriptscriptstyle
          1pt
        \fi
      \fi
    \fi
  \fi
}
\newcommand{\dot@width}[1]{%
  \ifx#1\displaystyle
    0.5pt
  \else
    \ifx#1\textstyle
      1.5pt
    \else
      \ifx#1\scriptstyle
        1.25pt
      \else
        \ifx#1\scriptscriptstyle
          1pt
        \fi
      \fi
    \fi
  \fi
}
\newcommand{\solidfill}[1]{\leaders\hrule\hfill}
\makeatother

\newcommand{\E}{\mathbb{E}}

\newcommand{\temp}[0]{\texttt{TEMP}\xspace}

\makeatother
\usepackage{thmtools}
\usepackage[framemethod=TikZ]{mdframed}
\mdfsetup{skipabove=1em,skipbelow=0em}

\theoremstyle{definition}

\declaretheoremstyle[
    headfont=\bfseries\color{ForestGreen!70!black}, bodyfont=\normalfont,
    mdframed={
        linewidth=2pt,
        rightline=false, topline=false, bottomline=false,
        linecolor=ForestGreen, backgroundcolor=ForestGreen!5,
                innertopmargin    = 0.8em,    % padding above the content
        innerbottommargin = 0.75em,    % padding below the content
    }
]{thmgreenbox}

\declaretheoremstyle[
    headfont=\bfseries\sffamily\color{NavyBlue!70!black}, bodyfont=\normalfont,
    mdframed={
        linewidth=2pt,
        rightline=false, topline=false, bottomline=false,
        linecolor=NavyBlue, backgroundcolor=NavyBlue!5,
                innertopmargin    = 0.8em,    % padding above the content
        innerbottommargin = 0.75em,    % padding below the content
    }
]{thmbluebox}

\declaretheoremstyle[
    headfont=\bfseries\color{NavyBlue!70!black}, bodyfont=\normalfont,
    mdframed={
        linewidth=2pt,
        rightline=false, topline=false, bottomline=false,
        linecolor=NavyBlue,
                innertopmargin    = 0.8em,    % padding above the content
        innerbottommargin = 0.75em,    % padding below the content
        backgroundcolor= NavyBlue!5,
    }
]{thmblueline}

\declaretheoremstyle[
    headfont=\bfseries\color{RedOrange!70!black}, bodyfont=\normalfont,
    mdframed={
        linewidth=2pt,
        rightline=false, topline=false, bottomline=false,
        linecolor=RedOrange, backgroundcolor=RedOrange!5,
                innertopmargin    = 0.8em,    % padding above the content
        innerbottommargin = 0.75em,    % padding below the content
    }
]{thmredbox}

\declaretheoremstyle[
    headfont=\bfseries\color{RedOrange!70!black}, bodyfont=\normalfont,
    numbered=no,
    mdframed={
        linewidth=2pt,
        rightline=false, topline=false, bottomline=false,
        linecolor=RedOrange, backgroundcolor=RedOrange!1,
                innertopmargin    = 0.8em,    % padding above the content
        innerbottommargin = 0.75em,    % padding below the content
    %     splittopskip=\topskip,      % allow the top of a split frame
    % splitbottomskip=\topskip,   % allow the bottom of a split frame
    % breakable                   % <--- this makes the frame breakable
    },
    qed=\qedsymbol
]{thmproofbox}

\declaretheoremstyle[
    headfont=\bfseries\color{NavyBlue!70!black}, bodyfont=\normalfont,
    numbered=no,
    mdframed={
        linewidth=2pt,
        rightline=false, topline=false, bottomline=false,
        linecolor=NavyBlue, backgroundcolor=NavyBlue!1,
                innertopmargin    = 0.8em,    % padding above the content
        innerbottommargin = 0.75em,    % padding below the content
    },
]{thmexplanationbox}

\declaretheoremstyle[
    headfont   = \bfseries\color{Goldenrod!80!black},
    bodyfont   = \normalfont,
    mdframed   = {
        linewidth      = 2pt,
        rightline      = false,
        topline        = false,
        bottomline     = false,
        innertopmargin    = 0.8em,    % padding above the content
        innerbottommargin = 0.75em,    % padding below the content
        linecolor      = Goldenrod,
        backgroundcolor= Goldenrod!10,
    }
]{thmtakeawaybox}

\declaretheorem[
    style    = thmtakeawaybox,
    numbered = no,
    name     = Takeaway
]{takeaway}

\declaretheorem[style=thmgreenbox, name=Definition]{definition}

\declaretheorem[style=thmredbox, name=Theorem]{theorem}

\declaretheorem[style=thmredbox, name=Corollary]{corollary}

\declaretheorem[style=thmproofbox, name=Proof]{replacementproof}
\renewenvironment{proof}[1][\proofname]{\vspace{-10pt}\begin{replacementproof}}{\end{replacementproof}}

\declaretheorem[style=thmexplanationbox, name=Proof]{tmpexplanation}

\declaretheorem[style=thmblueline, numbered=no, name=Remark]{remark}

\usepackage{etoolbox}
\AtEndEnvironment{vb}{\null\hfill$\diamond$}%
\AtEndEnvironment{intermezzo}{\null\hfill$\diamond$}%

\makeatletter

\title{What Does It Take to Build a\\Performant Selective Classifier?}

% The \author macro works with any number of authors. There are two commands
% used to separate the names and addresses of multiple authors: \And and \AND.
%
% Using \And between authors leaves it to LaTeX to determine where to break the
% lines. Using \AND forces a line break at that point. So, if LaTeX puts 3 of 4
% authors names on the first line, and the last on the second line, try using
% \AND instead of \And before the third author name.

\author{%
  Stephan Rabanser\thanks{Work done while at the University of Toronto and the Vector Institute.} \\
  Princeton University \\
  \texttt{rabanser@princeton.edu} \\
  \And
  Nicolas Papernot \\
  University of Toronto \& Vector Institute \\
  \texttt{nicolas.papernot@utoronto.ca} \\
}

\begin{document}

\maketitle

\begin{abstract}
\noindent Selective classifiers improve model reliability by abstaining on inputs the model deems uncertain. However, few practical approaches achieve the gold-standard performance of a perfect-ordering oracle that accepts examples exactly in order of correctness. Our work formalizes this shortfall as the \emph{selective-classification gap} and present the first finite-sample decomposition of this gap to five distinct sources of looseness: Bayes noise, approximation error, ranking error, statistical noise, and implementation- or shift-induced slack. Crucially, our analysis reveals that monotone post-hoc calibration—often believed to strengthen selective classifiers—has limited impact on closing this gap, since it rarely alters the model’s underlying score ranking. Bridging the gap therefore requires scoring mechanisms that can effectively reorder predictions rather than merely rescale them. We validate our decomposition on synthetic two-moons data and on real-world vision and language benchmarks, isolating each error component through controlled experiments. Our results confirm that (i) Bayes noise and limited model capacity can account for substantial gaps, (ii) only richer, feature-aware calibrators meaningfully improve score ordering, and (iii) data shift introduces a separate slack that demands distributionally robust training. Together, our decomposition yields a quantitative error budget as well as actionable design guidelines that practitioners can use to build selective classifiers which approximate ideal oracle behavior more closely.
\end{abstract}

\section{Introduction}
\label{sec:introduction}

In high-stakes applications like finance~\citep{9260038}, healthcare~\citep{guan2020bounded}, and autonomous driving~\citep{ghodsi2021generating}, machine learning (ML) models are increasingly tasked with making decisions under uncertainty, where dependable predictions are critical. Selective classifiers~\citep{chow1957optimum, el2010foundations} formalize the option to abstain on inputs deemed unreliable, reducing the risk of costly errors by refusing to predict when uncertain. Their effectiveness depends on identifying which predictions to trust and which to defer. A common evaluation metric is the \emph{accuracy–coverage} tradeoff, which quantifies how performance degrades as the model accepts a broader set of inputs. The benchmark is a hypothetical oracle that ranks inputs by their true likelihood of correctness, yielding a \emph{perfect-ordering upper bound}~\citep{geifman2018bias, rabanser2023training}. While some selective predictors approach this bound, others fall short—revealing persistent gaps and raising open questions about what properties of the learning setup truly govern selective performance.

Classical theory explains selective classification in two idealized regimes. In the \emph{realizable} setting~\citep{el2010foundations}, where the data is noiseless and the true predictor lies within the hypothesis class, the model can asymptotically achieve the ideal accuracy–coverage curve. In the more general \emph{agnostic} setting~\citep{wiener2011agnostic}, the classifier competes with the best-in-class predictor, but this benchmark may itself fall well below the oracle bound—and the theory does not isolate the source of the gap. Yet in practice, we never operate in such asymptotic or idealized conditions: models are often misspecified, the data used for training and evaluation are finite, and asymptotic guarantees offer little actionable insight. As a result, even the strongest formal guarantees provide limited guidance, which leaves practitioners with the following question:
\begin{quote}
\centering
\emph{For my finite model on finite data, what aspects of the learning setup will actually move my trade-off curve closer to the perfect-ordering upper bound?}
\end{quote}

To answer this question, we re‑frame selective performance around the \emph{selective classification
gap}~\(\Delta(c)\): the mismatch between a model’s accuracy–coverage curve and the oracle
bound for all coverage levels $c$ (see Figure~\ref{fig:overview}). 
Our work shows that this gap admits a finite‑sample decomposition:
\begin{equation}
\label{eq:intro-gap}
\widehat{\Delta}(c)
\;\le\;
\underbrace{\varepsilon_{\text{Bayes}}(c)}_{\text{\scriptsize irreducible}}
+\;\underbrace{\varepsilon_{\text{approx}}(c)}_{\text{\scriptsize capacity}}
+\;\underbrace{\varepsilon_{\text{rank}}(c)}_{\text{\scriptsize ranking}}
+\;\underbrace{\varepsilon_{\text{stat}}(c)}_{\text{\scriptsize data}}
+\;\underbrace{\varepsilon_{\text{misc}}(c)}_{\text{\scriptsize optimization\;\&\;shift}},
\qquad\forall c\in(0,1].
\end{equation}

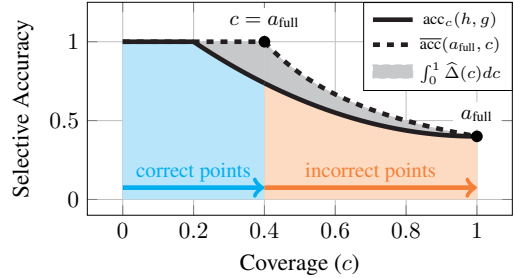
\begin{wrapfigure}{R}{0.50\textwidth}
\vspace{-14pt}
    \centering
    \resizebox{\linewidth}{!}{
    \begin{tikzpicture}[
		declare function={}
		]
\begin{axis}[%
  xlabel=Coverage ($c$),
  ylabel=Selective Accuracy,
  xmin = -0.1,
  xmax = 1.1,
  ymin = -0.1,
  ymax = 1.25,
  grid=major,
  width=8cm,
  height=4.8cm,
  tick label style={/pgf/number format/fixed},
  legend style={at={(0.825,1.001)}, anchor=north},]
  
  \newcommand\afull{0.4}
  \newcommand\afullinccoord{{\afull + (1 - \afull)/2}}

  \addplot+[mark={},line width=2pt,Black,name path=C, domain=0:\afull/2] {1};
		  
  \addplot+[mark={},line width=2pt,Black,name path=D, domain=\afull/2:\afull] {15/16*x^2 - 15/8 * x + 107/80};
  
  \addplot+[mark={},line width=2pt,Black,name path=E, domain=\afull:1] {15/16*x^2 - 15/8 * x + 107/80};
  
  \addplot+[mark={},dashed,line width=2pt,Black,name path=A, domain=0:\afull] {1};
  \addplot+[mark={},dashed,line width=2pt,Black,name path=B, domain=\afull:1] {\afull/x};
  \addplot+[draw=none,mark=none,name path=C,domain=0:1] {0};
  \addplot+[Cyan!25!white] fill between[of=A and C,soft clip={domain=0:\afull}];
  \addplot+[Orange!25!white] fill between[of=B and C,soft clip={domain=\afull:1}];
  %\draw [black, line width=1pt, dashed] (\afull,0) -- (\afull, 1.0);
  \node[black, fill=white] at (1,\afull+0.13) {\footnotesize $a_\text{full}$};
  \node[black, fill=white] at (\afull,1.13) {\footnotesize $c = a_\text{full}$};

  % \addplot+[black,fill=black] coordinates{(\afull,1)};
  % \addplot+[black,fill=black] coordinates{(1,\afull)};

  \fill[black] (\afull, 1) circle[radius=2.5pt];
  \fill[black] (1,\afull) circle[radius=2.5pt];

  \addplot+[Black!25!white] fill between[of=A and D,soft clip={domain=0:1}];
  
    \addplot+[Black!25!white] fill between[of=B and E,soft clip={domain=0:1}];
  
  \draw [->, Orange, line width=2pt] (\afull,0.075) -- (1, 0.075);
  \draw [->, Cyan, line width=2pt] (0,0.075) -- (\afull, 0.075);
  \node[Cyan] at (\afull/2,0.175) {\footnotesize correct points};
  \node[Orange] at (\afullinccoord,0.175) {\footnotesize incorrect points};

  \legend{ \scriptsize \realtradeoff,,, \scriptsize \upperbound,,,,, \scriptsize $\int_0^1 \widehat{\Delta}(c)dc$};
 
\end{axis}
\end{tikzpicture}
    }
    \vspace{-15pt}
    \caption{\textbf{Visualization of the selective classification gap $\Delta(c)$.}  
      The dashed curve is the oracle frontier \(\overline{\mathrm{acc}}(a_{\text{full}},c)\) under which coverage levels left of \(c=a_{\text{full}}\) (\textcolor{Cyan}{blue}) accept all correct predictions first, and rank incorrect predictions last (\textcolor{Orange}{orange}). This constitutes the ideal behavior of a selective predictor. On the other hand, the solid curve shows the realized selective accuracy \(\mathrm{acc}_{c}(h,g)\).  
      The mismatch between \(\overline{\mathrm{acc}}(a_{\text{full}},c)\) and \(\mathrm{acc}_{c}(h,g)\) at coverage $c$ is the gap \(\Delta(c)\);  
      the \textcolor{gray}{gray} shaded area visualizes this gap over the full coverage spectrum.}
      \vspace{-15pt}
    \label{fig:overview}
\end{wrapfigure}

Each term corresponds to a distinct—and often \emph{measurable}—source of looseness. 
The first term, \(\varepsilon_{\text{Bayes}}(c)\), reflects irreducible uncertainty: if the true label is inherently unpredictable from the input (e.g., due to label noise), even a perfect classifier must abstain on some examples. 
Next, \(\varepsilon_{\text{approx}}(c)\) captures limits of the model class: if the function class is too weak to approximate the Bayes-optimal decision rule, the gap widens. 
The third term, \(\varepsilon_{\text{rank}}(c)\), quantifies the model’s failure to correctly order inputs by their likelihood of correctness—typically due to poor confidence estimation or miscalibration. 
The statistical term \(\varepsilon_{\text{stat}}(c)\) accounts for finite-sample fluctuations that affect both learning and evaluation. 
Finally, \(\varepsilon_{\text{misc}}(c)\) aggregates practical imperfections, such as optimization error or test-time distribution shift. Equation~\eqref{eq:intro-gap} thus provides a coverage-uniform ``error budget'' that transforms the qualitative question posed earlier into a concrete quantitative diagnosis.

Two key insights, developed further in later sections, are worth previewing. First, we show that monotone post‑hoc calibration---a technique that is often thought to improve selective prediction performance---only possess a limited ability of reducing the \emph{ranking} term \(\varepsilon_{\text{rank}}(c)\). In contrast, methods that directly yield improved uncertainty scores by leveraging richer feature representations or aggregating diverse model perspectives dominate post-hoc calibration methods. Second, Equation~\eqref{eq:intro-gap} serves as an \emph{error budget} that identifies cost‑effective levers leading to actionable recommendations for practitioners: (i)~use additional or repeated labels and noise‑robust losses to reduce \(\varepsilon_{\text{Bayes}}\); (ii)~increase capacity or distill from a more expressive teacher to shrink \(\varepsilon_{\text{approx}}\); (iii) enlarge validation data to lower \(\varepsilon_{\text{stat}}\); and (iv) apply domain adaptation or importance weighting to address \(\varepsilon_{\text{misc}}\).

\vspace{8pt}
\textbf{Contributions.}
We summarize our main contributions below:

\begin{itemize}[leftmargin=1em]
    \item \textbf{Problem formulation.}  
          We recast selective prediction in terms of a \emph{coverage‑uniform selective classification gap}—the key quantity to minimize to approach perfect selective prediction.  
This framing unifies prior work and highlights which failure modes dominate at each coverage level.

    \item \textbf{Theoretical analysis.}  
          We present the first \emph{finite‑sample decomposition} of the selective classification gap (Equation~\eqref{eq:intro-gap}), dividing it into five terms: Bayes, approximation, ranking, statistical, and miscellaneous errors.  
Our analysis further shows that \emph{monotone calibration is ineffective at reducing the gap}, motivating the use of methods that can change the ranking more flexibly.

\item \textbf{Empirical validation.}  
      Our synthetic and real-world experiments confirm the decomposition: Bayes noise and capacity limits drive large gaps; temperature scaling improves calibration but not ranking; and shift-aware methods remain essential under distribution shift. \emph{These results clarify which factors matter most and how to target them effectively in practice.}
\end{itemize}

\section{Background \& Related Work on Selective Classification}
\label{sec:background}

Selective classification extends the standard supervised classification framework as follows:

\begin{definition}[Selective Classifier~\citep{chow1957optimum, el2010foundations}]
A selective classifier is a pair \( (h, g) \), where \( h: \mathcal{X} \to \mathcal{Y} \) is a classifier over covariates \( \mathcal{X} = \mathbb{R}^D \) and labels \( \mathcal{Y} = \{1, \ldots, K\} \), and \( g: \mathcal{X} \times (\mathcal{X} \to \mathcal{Y}) \to \mathbb{R} \) is a selection function that assigns a confidence score.  
Given a threshold \( \tau \in \mathbb{R} \), the model abstains when the score falls below the threshold:
\begin{equation}
    \label{eq:sel_class}
    (h, g)(x) = \begin{cases}
    h(x) & \text{if } g(x, h) \geq \tau \\
    \bot & \text{otherwise}
    \end{cases}
    \enspace .
\end{equation}
\end{definition}

Intuitively, a selective classifier predicts only when confident.  
The selection score \(g(x, h)\) determines whether to accept or abstain: if \(g(x, h) \geq \tau\), the model outputs \(h(x)\); otherwise, it returns \(\bot\).

Many prior works have developed selective classification methods for training competitive pairs \((h, g)\).  
A popular method is \emph{Maximum Softmax Probability} (\msp)~\citep{hendrycks2016baseline, geifman2017selective}, which uses classifier confidence as the selection score.  
To improve calibration and reduce predictive variance, ensembling approaches have been explored: \emph{Deep Ensembles} (\de)~\citep{lakshminarayanan2017simple} train multiple models with different initializations, while \emph{Selective Classification via Training Dynamics} (\sctd)~\citep{rabanser2022selective} ensembles intermediate checkpoints.  
Other methods—such as \emph{SelectiveNet} (\sn)\citep{geifman2019selectivenet}, \emph{Deep Gamblers} (\dg)\citep{liu2019deep}, and \emph{Self-Adaptive Training} (\sat)~\citep{huang2020self}—alter the model architecture or loss function ensuring that prediction and rejection are learned jointly.

The efficacy of a selective classifier is evaluated using the empirical accuracy-coverage tradeoff.

\begin{definition}[Empirical Accuracy–Coverage Tradeoff]
\label{def:emp_acc_cov}
Let \(D=\{(x_i,y_i)\}_{i=1}^N\) be a dataset.  For a selective classifier \((h,g)\) and threshold \(\tau\), define
\begin{align}
\label{eq:emp_coverage}
\hat{\xi}_{h,g}(\tau)
&= \frac{1}{N}\,\bigl|\{\,i : g(x_i,h) \ge \tau \}\bigr|,
\\[6pt]
\label{eq:emp_accuracy}
\hat{\alpha}_{h,g}(\tau)
&= 
\begin{cases}
\displaystyle
\frac{\bigl|\{\,i : h(x_i)=y_i \text{ and } g(x_i,h) \ge \tau \}\bigr|}{
      \bigl|\{\,i : g(x_i,h) \ge \tau \}\bigr|}, 
& \text{if } \hat{\xi}_{h,g}(\tau)>0,\\[10pt]
0, & \text{if } \hat{\xi}_{h,g}(\tau)=0.
\end{cases}
\end{align}
The pair \((\hat{\xi}, \hat{\alpha})\) as \(\tau\) varies is the empirical accuracy–coverage curve.
\end{definition}

The score \(g(x,h)\) therefore induces a total order over \(D\): \(x_1\) is accepted before \(x_2\) if \(g(x_1,h) > g(x_2,h)\).  
This ordering governs which inputs are retained as coverage decreases.  
Effective strategies aim to maximize \(\hat{\alpha}\) at each coverage level \(\hat{\xi}\), often trading off accuracy and coverage.

\paragraph{Accuracy–coverage tradeoff evaluation.}
The accuracy–coverage tradeoff is often summarized by the area under the accuracy–coverage curve (\texttt{AUACC}), integrating selective accuracy over all coverage levels~\citep{traub2024overcoming}. However, \citet{geifman2018bias} show that \texttt{AUACC} favors models already accurate at full coverage. To address this issue, \citet{geifman2018bias} and \citet{rabanser2023training} propose oracle-based bounds, which become loose at low utility~\citep{galil2023can}. To avoid accuracy bias, \citet{galil2023can} and \citet{pugnana2023auc} recommend using the classifier’s \texttt{AUROC} instead. But \texttt{AUROC} is not monotonic in \texttt{AUACC}~\citep{cattelan2023fix, ding2020revisiting}, thus favoring methods tuned for \texttt{AUROC} over selective accuracy. Recently, \citet{traub2024overcoming} introduced the Area Under the Goals-Reweighted Curve (\texttt{AUGRC}), which multiplies accuracy by coverage to mitigate bias toward low-coverage regions, while \citet{mucsanyi2024benchmarking} provide a benchmark disentangling uncertainty sources for fairer comparison. These efforts refine \emph{evaluation metrics}, whereas our work complements them by analyzing \emph{what causes} selective performance gaps. Earlier work~\citep{el2010foundations, wiener2011agnostic} characterizes optimal selective classifiers in both realizable and agnostic regimes but focuses on existence rather than practical instantiation—unlike our finite-sample perspective.

\section{Decomposing the Selective Classification Gap}
\label{sec:methods}

We characterize the optimal performance achievable by a selective classifier given its full-coverage accuracy, establishing a reference against which all practical selective classifiers can be evaluated.

\subsection{Oracle Bound and Selective Classification Gap}

\begin{definition}[Perfect Ordering Upper Bound~\citep{geifman2018bias, rabanser2023training}]
\label{def:poub}
Fix a base classifier \(h\) whose \emph{full‑coverage} (standard) accuracy is
\(a_{\text{full}}:=\Pr\bigl(h(X)=Y\bigr)\in[0,1]\).
For any desired coverage level \(c\in(0,1]\), the best selective
accuracy—achieved by accepting the \(c\)-fraction of points with the \emph{highest}
posterior correctness $\Pr(h(X)=Y\mid X)$—is
\begin{equation}
\label{eq:bound}
\overline{\mathrm{acc}}\bigl(a_{\text{full}},c\bigr)
=\;
\begin{cases}
1, & 0 < c \le a_{\text{full}}, \\[6pt]
\dfrac{a_{\text{full}}}{c}, & a_{\text{full}} < c < 1.
\end{cases}
\end{equation}
\end{definition}

Assuming no Bayes noise—that is, all errors are avoidable given perfect confidence—this piecewise curve (see Figure~\ref{fig:overview}) traces the \emph{oracle} accuracy–coverage frontier based on a perfect ranking of examples by correctness probability. Any real selective classifier falls below this bound—potentially far below, depending on its calibration, expressivity, and sensitivity to noise. To quantify how far a given classifier falls short of this ideal, we define the \emph{selective classification gap}.

\begin{definition}[Selective Classification Gap]
\label{def:gap}
Let \((h,g)\) be a selective classifier with full‑coverage accuracy 
\(a_{\mathrm{full}}=\Pr(h(X)=Y)\).  For a coverage level \(c\in(0,1]\), let
\(\tau_c\) be the threshold satisfying \(\Pr\bigl(g(X,h)\ge\tau_c\bigr)=c\).  The \emph{selective accuracy} at coverage \(c\) is
\(
\mathrm{acc}_c(h,g)
:=
\Pr\bigl(h(X)=Y \;\bigm|\; g(X,h)\ge\tau_c\bigr).
\)
The \emph{selective classification gap} at coverage \(c\) is then defined as the deviation from the perfect-ordering upper bound:
\begin{equation}
\Delta(c)
:=
\overline{\mathrm{acc}}\bigl(a_{\mathrm{full}},c\bigr)
\;-\;\mathrm{acc}_c(h,g).
\end{equation}
\end{definition}
This gap $\Delta(c)$ can be interpreted as the \emph{excess selective risk} at a given coverage $c$. We note that integrating $\Delta(c)$ over the entire coverage spectrum, $\int_0^1 \Delta(c) dc$, is equivalent to the definition of the Excess-AURC (E-AURC) metric proposed by \citet{geifman2018bias}.

The function \(\Delta(c)\) offers a coverage-resolved diagnostic of selective performance. A small gap indicates near-oracle behavior—accepting only examples it can confidently and correctly classify—while a large gap suggests limitations in estimating correctness or ranking examples reliably. Understanding the magnitude and shape of this gap is key to analyzing and improving selective classifiers.

\subsection{Why Is the Upper Bound Loose?}
\label{sec:why-loose}

The oracle bound in Definition~\ref{def:poub} relies on two idealized
assumptions: perfect prediction on all inputs and perfect ranking by
the true correctness posterior. In practice, selective classifiers deviate in
four principal ways, each corresponding to a term in our later
decomposition (\(\varepsilon_{\text{Bayes}},\varepsilon_{\text{approx}},
\varepsilon_{\text{rank}},\varepsilon_{\text{stat}}\)):

\begin{enumerate}[leftmargin=1.2em]

\item \textbf{Bayes noise (\(\varepsilon_{\text{Bayes}}\)).}  
  Even a Bayes-optimal rule errs on intrinsically ambiguous points
(where \(\max_y \Pr(Y=y\mid X)<1\)), unavoidable in real data~\citep{devroye2013probabilistic}.  
As coverage increases, the oracle must accept some of these noisy inputs, lowering the achievable accuracy.

\item \textbf{Approximation limits (\(\varepsilon_{\text{approx}}\)).}  
  A learned model \(h\) drawn from a restricted hypothesis class may
  misclassify inputs with high posterior confidence under the Bayes rule~\citep{bishop2006pattern}.  
  This gap reduces full-coverage accuracy and limits selective performance.

\item \textbf{Ranking error (\(\varepsilon_{\text{rank}}\)).}  
  Let \(\eta_h(x):=\Pr\bigl(h(x)=Y\mid X=x\bigr)\) denote the true
  correctness posterior, i.e., the probability that the model’s prediction is correct given the input.  
  Ideally, the confidence score \(g(X,h)\) should rank examples in decreasing order of \(\eta_h(x)\)—so that samples the model is likely to classify correctly (high \(\eta_h(x)\), examples that are ``easy'') are accepted before those it is likely to misclassify (low \(\eta_h(x)\), examples that are ``hard'').  
  When \(g(X,h)\) fails to preserve this ordering, high-confidence errors and low-confidence corrects are interleaved, increasing the selective gap~\(\Delta(c)\).

\item \textbf{Statistical noise (\(\varepsilon_{\text{stat}}\)).}  
  Estimating the threshold \(\tau_c\) and selective accuracy from a finite validation set introduces randomness
  of order \(\mathcal{O}(\sqrt{\log(1/\delta)/n})\). This follows from concentration bounds; see~\citet{shalev2014understanding} for standard applications in learning theory.

\end{enumerate}

\begin{takeaway}
The selective classification gap \(\Delta(c)\) reflects a mix of irreducible noise,
model misspecification, ranking errors, and sampling variability. Addressing each—via cleaner labels,
stronger models, or improved ranking—can tighten selective prediction performance.
\end{takeaway}

In the next subsection, we formalize this decomposition and provide a general bound on the total gap.

\subsection{Formal Decomposition of the Gap}
\label{sec:formal-gap}

We now give a principled decomposition of the selective classification gap and provide a corresponding finite-sample upper bound. For clarity and notational simplicity, we treat the binary‑label case \(\mathcal{Y}=\{0,1\}\); the multiclass extension follows by a standard one‑vs‑rest reduction.

\textbf{Notation.} Let \(\eta(x):=\Pr\bigl(Y=1\mid X=x\bigr)\) be the Bayes posterior.
For a fixed classifier \(h:\mathcal{X}\to\mathcal{Y}\) define its
(induced) correctness posterior
\begin{equation}
  \eta_h(x)\;:=\;\Pr\bigl(h(x)=Y\mid X=x\bigr)
  \;=\;\eta(x)\,\mathbb{I}_{\{h(x)=1\}}+
        \bigl(1-\eta(x)\bigr)\mathbb{I}_{\{h(x)=0\}}.
\end{equation}
All expectations and probabilities are taken w.r.t.\ the true data distribution
\(\mathcal{D}\). Throughout let \(g(x,h)\) be the confidence score.
For a target coverage \(c\in(0,1]\) denote by
\begin{equation}
  t_c \quad \text{s.t.}\quad
  \Pr\bigl(g(X,h)\ge t_c\bigr)=c
\end{equation}
the \emph{population threshold}, and write the
\emph{accepted region}
\(A_c:=\{x:g(x,h)\ge t_c\}\).  
The oracle that attains the perfect‑ordering bound accepts $A_c^{\star}:=\bigl\{x:\eta_h(x)\text{ is among the largest }c\text{-fraction}\bigr\}$.

\textbf{Error terms.} We isolate the following sources of error affecting selective prediction performance:
\begin{align}
\varepsilon_{\text{Bayes}}(c)
&:=\E\Bigl[1-\max\{\eta(X),1-\eta(X)\}\;\Bigm|\;X\in A_c\Bigr],
\\[4pt]
\varepsilon_{\text{approx}}(c)
&:=\E\Bigl[\bigl|\eta_h(X)-\eta(X)\bigr|\;\Bigm|\;X\in A_c\Bigr],
\\[4pt]
\varepsilon_{\text{rank}}(c)
&:=\E\bigl[\eta_h(X)\mid X\in A_c^{\star}\bigr]
  -\E\bigl[\eta_h(X)\mid X\in A_c\bigr]\;\;\;\;\;\;(\ge0),
\\[4pt]
\varepsilon_{\text{stat}}(c)
&:=C\sqrt{\frac{\log(1/\delta)}{n}},
\end{align}
where \(n\) is the evaluation‑set size, \(\delta\in(0,1)\) a confidence
parameter, and \(C>0\) an absolute constant. Intuitively, \(\varepsilon_{\text{Bayes}}\) is the irreducible label noise inside the accepted region; \(\varepsilon_{\text{approx}}\) measures how far \(h\) is from Bayes-optimal on the \emph{selected} inputs; \(\varepsilon_{\text{rank}}\) is a \emph{ranking regret} measuring the accuracy loss due solely to picking the wrong \(c\)-fraction of samples; and \(\varepsilon_{\text{stat}}\) captures the \emph{sampling uncertainty} due to evaluating on a finite dataset. Note that we freeze the acceptance set $A_c$ defined by the current scoring function and ask how much worse the learned classifier $h$ is than the Bayes-optimal rule.

\begin{remark}[Distance to a Perfect Ranker]
A natural way to gauge how far the learned acceptance rule is from the oracle is
the \emph{mass mis-ordered}
\begin{equation}
    D_{\text{rank}}(c)\;:=\;
    \Pr\bigl(X\in A_c^{\star}\setminus A_c\bigr)
    +\Pr\bigl(X\in A_c\setminus A_c^{\star}\bigr).
\end{equation}
It equals the total probability of examples that would have to be
\emph{swapped} between $A_c$ and $A_c^{\star}$ to recover perfect ordering.
Hence $D_{\text{rank}}(c)=0$ iff $A_c=A_c^{\star}$, in which case
$\varepsilon_{\text{rank}}(c)$ also vanishes.
\end{remark}

\begin{theorem}[Selective Gap Bound]
\label{thm:gap}
For a coverage level \(c\in(0,1]\) and a selective classifier \((h,g)\) the population gap obeys
\begin{equation}
\Delta(c)=\overline{\mathrm{acc}}\bigl(a_{\mathrm{full}},c\bigr)
-\mathrm{acc}_c(h,g)
\;\le\;
\varepsilon_{\text{Bayes}}(c)
+\varepsilon_{\text{approx}}(c)
+\varepsilon_{\text{rank}}(c).
\label{eq:pop-gap-ranking}
\end{equation}
Let \(\widehat{\Delta}(c)\) be the empirical gap on \(n\) i.i.d.\
test points.  Then, with probability at least \(1-\delta\),
\begin{equation}
\widehat{\Delta}(c)
\;\le\;
\varepsilon_{\text{Bayes}}(c)
+\varepsilon_{\text{approx}}(c)
+\varepsilon_{\text{rank}}(c)
+C\sqrt{\tfrac{\log(1/\delta)}{n}}.
\label{eq:emp-gap-ranking}
\end{equation}
\end{theorem}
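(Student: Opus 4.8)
The plan is to prove the population inequality \eqref{eq:pop-gap-ranking} first and then add a concentration layer for the empirical statement \eqref{eq:emp-gap-ranking}. For the population part, the two moves are (i) rewrite the selective accuracy as a conditional expectation of the correctness posterior via the tower rule, and (ii) telescope the oracle value down to that conditional expectation through a short chain of insertions, reading off one error term at each step. Concretely, by the tower rule $\mathrm{acc}_c(h,g)=\Pr(h(X)=Y\mid X\in A_c)=\E[\eta_h(X)\mid X\in A_c]$, so $\Delta(c)=\overline{\mathrm{acc}}(a_{\mathrm{full}},c)-\E[\eta_h(X)\mid X\in A_c]$. Using the definition of $\varepsilon_{\text{rank}}(c)$ verbatim, $\E[\eta_h(X)\mid X\in A_c]=\E[\eta_h(X)\mid X\in A_c^\star]-\varepsilon_{\text{rank}}(c)$, and $\varepsilon_{\text{rank}}(c)\ge 0$ because $A_c^\star$ maximizes $\E[\eta_h(X)\mid X\in\,\cdot\,]$ over all measure-$c$ acceptance sets (it takes the largest $c$-fraction of $\eta_h$). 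Substituting gives $\Delta(c)=\bigl(\overline{\mathrm{acc}}(a_{\mathrm{full}},c)-\E[\eta_h\mid A_c^\star]\bigr)+\varepsilon_{\text{rank}}(c)$.

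\textbf{Bounding the oracle-versus-achievable term.} I would then insert the idealized value $1$ and the Bayes-optimal correctness level $\max\{\eta,1-\eta\}$:
\begin{align*}
\overline{\mathrm{acc}}(a_{\mathrm{full}},c)-\E[\eta_h\mid A_c^\star]
&=\underbrace{\bigl(\overline{\mathrm{acc}}(a_{\mathrm{full}},c)-1\bigr)}_{\le\,0}
\;+\;\underbrace{\E\bigl[1-\max\{\eta,1-\eta\}\,\bigm|\,A_c^\star\bigr]}_{\text{Bayes error absorbed}}
\;+\;\underbrace{\E\bigl[\max\{\eta,1-\eta\}-\eta_h\,\bigm|\,A_c^\star\bigr]}_{\text{approximation shortfall}}.
\end{align*}
The first bracket is nonpositive since $\overline{\mathrm{acc}}\le 1$; the second is the irreducible label noise inside the accepted region; and the third is $\ge 0$ by the pointwise fact $\eta_h(x)\in\{\eta(x),1-\eta(x)\}$ so $\eta_h(x)\le\max\{\eta(x),1-\eta(x)\}$, and it is exactly (resp.\ is bounded by) the approximation term once one accounts for the sign of $2\eta(x)-1$ when comparing $\max\{\eta,1-\eta\}-\eta_h$ with $|\eta_h-\eta|$. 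Dropping the nonpositive bracket and combining with the ranking step yields \eqref{eq:pop-gap-ranking}. Equivalently, since $\varepsilon_{\text{rank}}(c)\ge 0$, one may skip the initial peel and insert $1$ and $\max\{\eta,1-\eta\}$ directly into $\overline{\mathrm{acc}}-\E[\eta_h\mid A_c]$ to land on the set $A_c$ used in the definitions, at the cost of a free nonnegative $\varepsilon_{\text{rank}}(c)$ slack.

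\textbf{From population to sample.} Write $\widehat\Delta(c)$ in terms of the empirical coverage threshold $\hat\tau_c$, the empirical selective accuracy $\hat\alpha_{h,g}$, and the empirical full-coverage accuracy. Three quantities must be controlled uniformly in $c$ with total failure probability $\delta$: (i) $\hat a_{\mathrm{full}}$ versus $a_{\mathrm{full}}$ by Hoeffding, combined with Lipschitzness of $\overline{\mathrm{acc}}(\cdot,c)$ in its first argument; (ii) the empirical score CDF versus its population counterpart, so that $\widehat A_c$ carries the right mass up to $O(\sqrt{\log(1/\delta)/n})$, via a DKW-type inequality; (iii) the empirical conditional accuracy on $\widehat A_c$ versus $\mathrm{acc}_c(h,g)$, via a Hoeffding/Bernstein bound for $[0,1]$-valued variables on the event that $\widehat A_c$ has the correct mass. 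A union bound over these three events, each at level $\delta/3$, collapses all sampling fluctuation into the single additive term $C\sqrt{\log(1/\delta)/n}=\varepsilon_{\text{stat}}(c)$, giving \eqref{eq:emp-gap-ranking}.

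\textbf{Main obstacle.} The delicate point is the reconciliation of conditioning regions in the second step: the clean telescoping lands the Bayes and approximation terms on the oracle set $A_c^\star$, whereas the stated definitions condition on the learned set $A_c$. Closing this cleanly requires either (a) reading $\varepsilon_{\text{approx}}(c)$ as the Bayes-correctness shortfall $\E[\max\{\eta,1-\eta\}-\eta_h\mid X\in A_c]$—the reading suggested by ``how far $h$ is from Bayes-optimal on the selected inputs''—and routing the region mismatch entirely through the (already nonnegative) $\varepsilon_{\text{rank}}(c)$ via optimality of $A_c^\star$, or (b) arguing directly on $A_c$ as noted above. A secondary subtlety in the empirical step is that conditioning on the data-dependent event $\widehat A_c$ inflates the variance in (iii) when $c$ is small (few accepted points), so the absolute constant $C$ implicitly carries a mild dependence on the coverage level that must either be tracked explicitly or neutralized by restricting attention to $c$ bounded away from $0$.
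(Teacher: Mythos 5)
Your proposal follows essentially the same route as the paper's proof: rewrite $\mathrm{acc}_c(h,g)=\E[\eta_h(X)\mid X\in A_c]$ via the tower rule, peel off $\varepsilon_{\text{rank}}(c)$ by inserting $\E[\eta_h\mid A_c^{\star}]$, bound the residual oracle term by the Bayes and approximation errors, and finish the empirical statement with Hoeffding plus a union bound. Your execution of the middle step is in fact cleaner than the paper's: inserting $1$ and $\max\{\eta,1-\eta\}$ gives the standard decomposition $1-\eta_h=\min\{\eta,1-\eta\}+\bigl(\max\{\eta,1-\eta\}-\eta_h\bigr)$ (Bayes error plus regret), whereas the paper routes everything through $\E[\eta_h-\mathbb{I}_{\{h=Y\}}\mid A_c]$, a quantity that is identically zero by the tower property, so its labelled "(approx+Bayes)" step does not actually carry the weight it claims. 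Your concentration argument (DKW for the threshold, Hoeffding for the conditional accuracies, union bound at level $\delta/3$) is also more explicit than the paper's one-line appeal to Hoeffding, and your caveat that the constant degrades for small $c$ is a real issue the paper ignores.

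That said, the obstacle you flag at the end is a genuine gap that your proposal does not close — and, notably, neither does the paper. Two specific points. First, the region mismatch: your telescoping lands $\varepsilon_{\text{Bayes}}$ and the regret term conditioned on $A_c^{\star}$, while the theorem's definitions condition on $A_c$; nonnegativity of $\varepsilon_{\text{rank}}(c)$ does not let you transport a conditional expectation of a \emph{different} integrand from $A_c^{\star}$ to $A_c$, since $\E[1-\max\{\eta,1-\eta\}\mid A_c^{\star}]$ can exceed $\E[1-\max\{\eta,1-\eta\}\mid A_c]$. Second, the pointwise comparison you defer to "accounting for the sign of $2\eta-1$" actually fails in one case: when $h(x)=1$ but $\eta(x)<\tfrac12$, one has $\max\{\eta,1-\eta\}-\eta_h=1-2\eta>0$ while $|\eta_h-\eta|=0$, so the regret is \emph{not} dominated by $\varepsilon_{\text{approx}}$ as the paper defines it. Both issues trace back to looseness in the paper's own definitions and proof rather than to an error of reasoning on your part, but as written your argument establishes the bound only with the Bayes and approximation terms evaluated on $A_c^{\star}$ (and with the regret in place of $|\eta_h-\eta|$), not the statement as literally claimed.
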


\begin{proof}
Because
\(
\mathrm{acc}_c(h,g)=\E[\eta_h(X)\mid A_c],
\)
the gap decomposes as
\[
  \Delta(c)
  =\underbrace{\E[\eta_h\mid A_c^{\star}]
               -\E[\eta_h\mid A_c]}_{\varepsilon_{\text{rank}}(c)}
   \;+\;
   \underbrace{\E[\eta_h-\mathbb{I}_{\{h=Y\}}\mid A_c]}
              _{\varepsilon_{\text{approx}}(c)}
   \;+\;
   \underbrace{\E[1-\max\{\eta,1-\eta\}\mid A_c]}
              _{\varepsilon_{\text{Bayes}}(c)}.
\]
This yields the population bound \eqref{eq:pop-gap-ranking}. 
For each expectation in the decomposition apply Hoeffding’s
inequality, a union bound over the three terms gives, with probability
\(1-\delta\),
\(
  |\widehat{\Delta}(c)-\Delta(c)|
  \le C\sqrt{\log(1/\delta)/n}.
\)
Adding this deviation to \eqref{eq:pop-gap-ranking} establishes
\eqref{eq:emp-gap-ranking}. \\See Appendix~\ref{app:proof-gap-ranking} for an extended proof with detailed intermediate steps.
\end{proof}

\paragraph{A single design choice can shrink multiple error terms.} We note that the individual error terms from the decomposition in Equation~\eqref{eq:emp-gap-ranking} can still interact with each other. For example, when the confidence score is the maximum softmax probability (MSP), a better approximation of the true conditional $\eta$ not only lowers the \emph{approximation} term $\varepsilon_{\text{approx}}(c)$ but also tends to align MSP more closely with $\eta_h$, thereby \emph{indirectly reducing} the ranking error $\varepsilon_{\text{rank}}(c)$. Conversely, a non-monotone calibration head can reduce $\varepsilon_{\text{rank}}(c)$ without improving $\varepsilon_{\text{approx}}(c)$.

\subsection{Calibration and Its (Limited) Effect on the Gap}
\label{sec:calibration-gap}

As shown in Theorem~\ref{thm:gap}, the selective classification gap includes a \emph{ranking error} term \(\varepsilon_{\text{rank}}(c)\), which captures misalignment between the confidence score and true correctness. Model calibration~\citep{niculescu2005predicting}—widely used to reduce over- or underconfidence—is often assumed to improve this alignment by transforming scores to better reflect correctness likelihood. Yet its effect on selective performance remains ambiguous and context-dependent. Prior work has reached conflicting conclusions: \citet{zhu2022rethinking} argue that calibration may degrade abstention behavior, while \citet{galil2023can} find that temperature scaling can improve selective prediction in practice. We show that the impact on the gap depends critically on the \emph{type} of calibration method used and its influence on the induced ranking. We begin by recalling the formal definition of calibration.

\begin{definition}[Perfect Calibration]
\label{def:calibration}
For each input \(x\) let a model produce a predicted label \(\hat y(x)\) and an associated confidence score \(s(x)\in[0,1]\). We say the model is \emph{perfectly calibrated} if
\begin{equation}
  \Pr\bigl(Y = \hat y(X)\;\bigm|\;s(X)=t\bigr) \;=\; t \qquad \text{for every confidence level}\ t\in[0,1].
  \label{eq:perfect-cal}
\end{equation}
\end{definition}

Practical estimators approximate~\eqref{eq:perfect-cal} via a post-hoc map \(\phi\) such that \(\tilde s(x)=\phi(s(x))\) approaches prefect calibration. \emph{Expected Calibration Error (ECE)}~\citep{naeini2015obtaining} quantifies this closeness:
\begin{equation}
  \text{ECE} = \sum_{b=1}^B \frac{|I_b|}{n}
  \left| \frac{1}{|I_b|} \sum_{i \in I_b} \mathbb{I}\{ \hat y(x_i) = y_i \}
         - \frac{1}{|I_b|} \sum_{i \in I_b} \tilde s(x_i) \right|,
  \label{eq:ece}
\end{equation}
where \(I_b\) is the set of indices in bin \(b\), \(n\) is the total number
of examples, and \(B\) is the number of bins.

\textbf{Monotone score-level calibration leaves the gap intact.}
Isotonic regression~\citep{zadrozny2002transforming} and histogram
binning~\citep{zadrozny2001obtaining} fit a \emph{monotone} \(\phi\colon[0,1]\to[0,1]\) that preserves score ordering.
Because monotone maps preserve ordering,
the acceptance set
\(A_c=\{x:\tilde s(x)\ge\tau_c\}\)
is identical to the one obtained from \(s(x)\);
hence the selective accuracy
\(\mathrm{acc}_c(h,g)\)
and the gap
\(
\Delta(c)=\overline{\mathrm{acc}}\bigl(a_{\text{full}},c\bigr)-\mathrm{acc}_c(h,g)
\)
are \emph{unchanged}.
Monotone calibration thereby reduces the approximation error \(\varepsilon_{\text{approx}}(c)\) in Section~\ref{sec:formal-gap} but leaves the ranking error \(\varepsilon_{\text{rank}}(c)\) untouched.

\textbf{The effect of temperature scaling on the SC gap.} Temperature scaling, the most widely used post-hoc calibration technique, divides every logit vector
\(z(x)\in\mathbb{R}^K\) by a scalar \(T>0\),
\begin{equation}
p_j^{(T)}(x)=
\frac{\exp(z_j(x)/T)}{\sum_k \exp(z_k(x)/T)}.
\end{equation}
While this operation leads to a \emph{monotone} rescaling of the \emph{logits}, it can lead to a \emph{non-monotone} rescaling of the \emph{softmax probabilities}. Since the softmax function is non-linear with respect to the temperature parameter, temperature scaling can therefore change the ranking of samples by confidence. This re-ranking can lead to small but empirically validated improvements in selective classification performance, as measured by metrics like AUROC \citep{galil2023can, cattelan2023fix}. However, the magnitude of this effect is inherently limited (see Appendix~\ref{app:ts-rerank} for an extended discussion). While temperature scaling can refine the ordering, it does not fundamentally alter the underlying quality of the model's uncertainty estimates.

\textbf{Moving the gap requires non-monotone scoring.} To reduce the selective classification gap $\Delta(c)$, it is not enough to calibrate scores post-hoc using monotone mappings. One must actively change the ranking of accepted examples to better reflect their true likelihood of correctness. Achieving this typically requires non-monotone scoring mechanisms that incorporate richer, instance-specific information—such as deep ensembles (\de), self-adaptive training (\sat), or learned correctness heads $g_\psi(x)$ that map hidden representations to confidence estimates. These approaches leverage model diversity, stochasticity, or internal feature structure to distinguish samples that would otherwise receive identical or wrongly ordered confidence values under standard softmax outputs.

\textbf{Why binning and vector scaling should not be used.} Histogram binning~\citep{naeini2015obtaining} and vector/Dirichlet scaling~\citep{kull2019beyond}—while widely to improve calibration—are poorly suited for selective classification. Histogram binning quantizes scores into a small number of bins, mapping wide score intervals to the same value and destroying within-bin ordering, which leads to effectively random selection among tied examples. Vector and Dirichlet scaling are post-hoc calibration methods that generalize temperature scaling by learning class-specific transformations of logits—vector scaling applies a linear transformation, while Dirichlet scaling interprets the logits as parameters of a Dirichlet distribution to better model uncertainty. Recent work by \citet{le2024confidence} shows that histogram binning and vector/Dirichlet scaling consistently degrade AUROC in selective classification. These results underscore our central claim: improving calibration does not guarantee better ranking. Reducing the selective classification gap requires score functions that explicitly learn to separate easy from hard examples, not just to produce better-calibrated probabilities.

\textbf{Loss prediction as a multicalibration litmus test.}
A complementary view on how calibration connects to ranking ability arises from the notion of \emph{multicalibration}~\citep{hebert2018multicalibration}, which requires that a model’s confidence be calibrated not only overall but also across many subgroups of inputs.  
Recent work by \citet{gollakota2025loss} shows that achieving strong multicalibration is equivalent to learning an accurate predictor of one’s own loss—that is, training an auxiliary model to estimate, for each input, the probability that the base predictor will be correct.  
Viewed this way, reliability becomes a self-forecasting problem: if a model (or an auxiliary head) can successfully predict its own 0--1 loss, then its confidence scores must already be well aligned with correctness, leaving little residual ranking error.  
We formalize this equivalence in Appendix~\ref{app:loss-pred} and show, both theoretically and empirically, that the degree to which a model’s loss can be predicted corresponds directly to the magnitude of the ranking-error term~$\varepsilon_{\text{rank}}(c)$.  
In short, when no auxiliary predictor can outperform the model’s own confidence scores at identifying its mistakes, the model is effectively multicalibrated and near the oracle frontier; conversely, any nontrivial loss-prediction advantage exposes where—and by how much—its internal ranking deviates from perfect ordering.

\begin{takeaway}
While post-hoc calibration with temperature scaling can provide modest improvements to ranking, it is not sufficient to close the SC gap. Substantially reducing the ranking error ($\varepsilon_\text{rank}$) requires more powerful scoring methods that actively re-rank examples based on richer information, such as feature-aware heads, ensembles, or non-monotone transformations.
\end{takeaway}

\subsection{Additional Practical Sources of Looseness}
\label{sec:extra-slack-short}
The decomposition in Theorem~\ref{thm:gap} captures the \emph{intrinsic} sources of error—Bayes noise, approximation limits, ranking error, and sampling slack—forming a principled bound that holds even under perfect optimization, infinite data, and i.i.d.\ testing. In practical deployments, however, additional imperfections can inflate the empirical gap \(\widehat{\Delta}(c)\). These stem from implementation details, scoring granularity, and distribution shift—not fundamental limits, but contingent slack terms reducible through better engineering. We summarize them below under a single \emph{residual slack} term \(\varepsilon_{\text{misc}}(c)\).

\begin{enumerate}[leftmargin=1.2em]
  \item \textbf{Optimization error \(\varepsilon_{\text{opt}}\).}  
    In practice, gradient‐based solvers rarely attain the empirical risk minimizer. If \(L(\theta)\) denotes the end-to-end training objective—encompassing model architecture, loss (e.g.\ cross-entropy), and training data—and \(\hat\theta\) its final iterate, then $\varepsilon_{\text{opt}}\;=\; L(\hat\theta) \;-\; \min_{\theta}L(\theta)$, which—via standard surrogate‐to‐0/1 calibration bounds—translates into a nonzero selective‐accuracy loss that persists even under infinite data.
  \item \textbf{Distribution shift \(\varepsilon_{\text{shift}}(c)\).}  
  When the test distribution \(p_{\mathrm{test}}\) deviates from the training distribution \(p_{\mathrm{train}}\), both calibration and ranking typically degrade. In particular, for a hypothesis class \(\mathcal{H}\), the gap due to shift can be bounded by an \emph{Integral Probability Metric (IPM)}~\citep{muller1997integral}:
  \begin{equation}
    \varepsilon_{\text{shift}}(c)\;\le\;\mathrm{IPM}_{\mathcal{H}}\bigl(p_{\mathrm{train}},\,p_{\mathrm{test}}\bigr)
    := \sup_{f \in \mathcal{H}} \left| \mathbb{E}_{p_{\mathrm{train}}}[f] - \mathbb{E}_{p_{\mathrm{test}}}[f] \right|.
  \end{equation}
  Hence, larger shifts in distribution (relative to \(\mathcal{H}\)) lead to wider selective classification gaps.
\end{enumerate}
\textbf{Residual slack.}  
The dominant practical sources of looseness are optimization error and distribution shift, summarized by \(\varepsilon_{\text{misc}}(c) := \varepsilon_{\text{opt}} + \varepsilon_{\text{shift}}(c)\).  
These two terms capture the main drivers of residual deviation between the theoretical and empirical gaps.  
For completeness, we discuss additional minor contributors such as threshold-selection noise or score quantization in Appendix~\ref{app:extra-slack-omitted}.  
Together, these effects make the bound in Equation~\eqref{eq:bound_full} \emph{sufficient}, not merely necessary, for explaining all observed looseness in practical selective classifiers,
yielding the streamlined high-probability bound.
\begin{equation}
\label{eq:bound_full}
  \widehat\Delta(c)
  \;\le\;
  \underbrace{\varepsilon_{\text{Bayes}}(c)
               +\varepsilon_{\text{approx}}(c)
               +\varepsilon_{\text{rank}}(c)
               +\varepsilon_{\text{stat}}(c)}_{\text{intrinsic}}
  \;+\;
  \varepsilon_{\text{misc}}(c).
\end{equation}

\begin{takeaway}
Only \(\varepsilon_{\text{Bayes}}\) reflects irreducible uncertainty; the other intrinsic terms—\(\varepsilon_{\text{approx}}\), \(\varepsilon_{\text{rank}}\), and \(\varepsilon_{\text{stat}}\)—can be reduced with better models, calibration, and data. The \emph{miscellaneous slack} \(\varepsilon_{\text{misc}}\) highlights optimization and shift-robustness as levers for closing the gap to the oracle.
\end{takeaway}

\section{Empirical Results}
\label{sec:experiments}

Our experimental study is organized around three guiding questions that reflect the theoretical decomposition in Section~\ref{sec:methods}. Unless otherwise specified, all results are averaged over 5 random seeds.

\subsection{Q1: How do Bayes error and approximation error shape the gap?}

\emph{Setup.}  
We conduct both synthetic and real-world experiments. For our synthetic results, which give us precise control over the data generation process, we simulate two sources of intrinsic difficulty on the two‑moons dataset:
(i) \textbf{noise $\sigma \in \{0.1,\,0.33,0.66,\,1.5\}$} controls how much the two moons expand into each other; and
(ii) \textbf{model capacity}, varied from logistic regression
(low capacity) to a shallow MLP (high capacity). For our real-world experiments we tackle the analysis similarly: for (a) we evaluate a trained CIFAR-10 model on the CIFAR-10N/100N~\citep{wei2021learning} datasets to assess which data points have large labeling disagreement; and for (b) we vary the model architecture across a simple CNN (details in Appendix~\ref{app:simplecnn}), a ResNet-18~\citep{he2016deep}, and a WideResNet-50~\citep{zagoruyko2016wide} on CIFAR-100~\citep{krizhevsky2009learning} and StanfordCars~\citep{krause20133d}. For each setting, we compute the Excess-AURC~(E-AURC)~\citep{geifman2018bias} by integrating the empirical gap $\widehat{\Delta}(c)$ across all coverage levels.

\begin{figure}[t]
  \centering
  \begin{subfigure}[t]{0.49\textwidth}
  \centering
    \includegraphics[width=\linewidth]{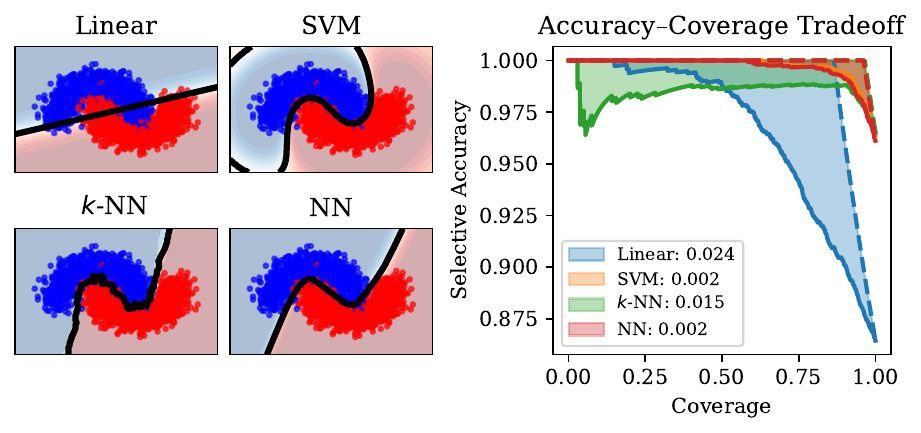}%
    \caption{Approximation error with two moons dataset.}
    \label{fig:left}
  \end{subfigure}%
  \begin{subfigure}[t]{0.24\textwidth}
    \centering
    \includegraphics[width=\linewidth]{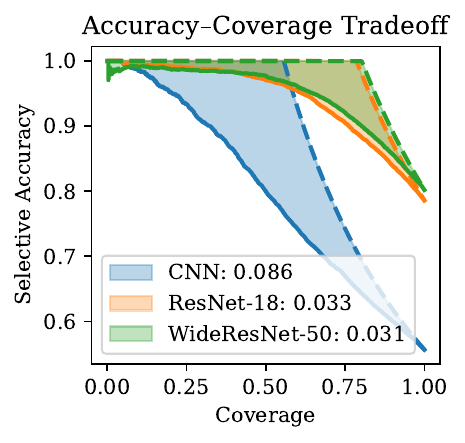} 
    \caption{CIFAR-100}
    \label{fig:right}
  \end{subfigure}
  \begin{subfigure}[t]{0.24\textwidth}
    \centering
    \includegraphics[width=\linewidth]{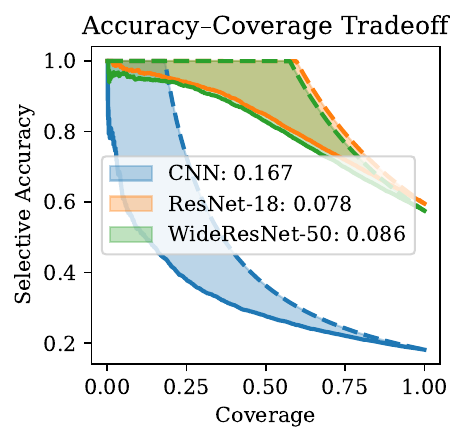}
    \caption{StanfordCars}
    \label{fig:right}
  \end{subfigure}
  \caption{\textbf{Experiments on approximation error}. We find that approximation error is a major contributor to the gap. (a) We show the two moons dataset fitted with models of different degrees of expressiveness as well as the corresponding accuracy-coverage tradeoffs. (b) + (c) Accuracy-coverage tradeoffs for various model architectures on CIFAR-100 and StanfordCars, respectively.}
  \label{fig:exp_appr}
\end{figure}

\begin{figure}[t]
  \centering
  \begin{subfigure}[t]{0.49\textwidth}
  \centering
    \includegraphics[width=\linewidth]{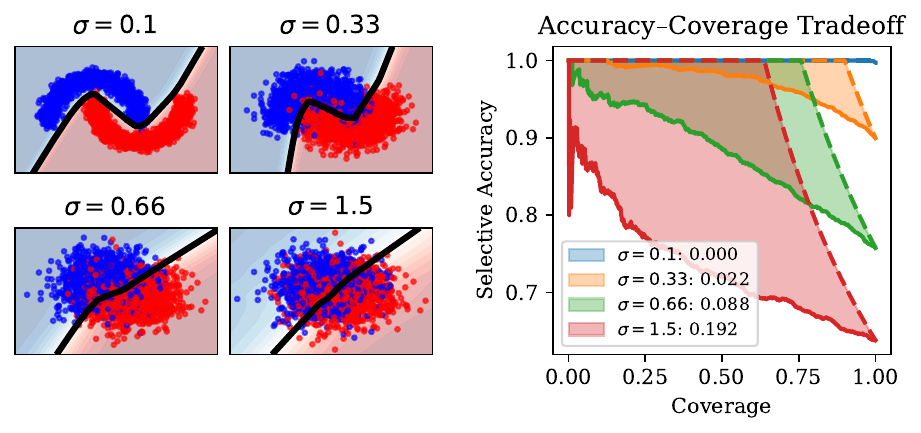}%
    \caption{Bayes error with two moons dataset.}
    \label{fig:left}
  \end{subfigure}
  \begin{subfigure}[t]{0.24\textwidth}
    \centering
    \includegraphics[width=\linewidth]{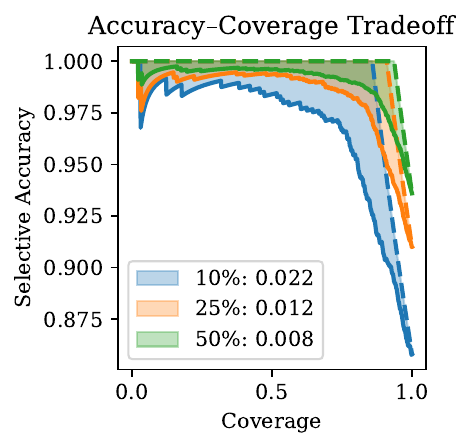}
    \caption{CIFAR-10N}
    \label{fig:right}
  \end{subfigure}
  \begin{subfigure}[t]{0.24\textwidth}
    \centering
    \includegraphics[width=\linewidth]{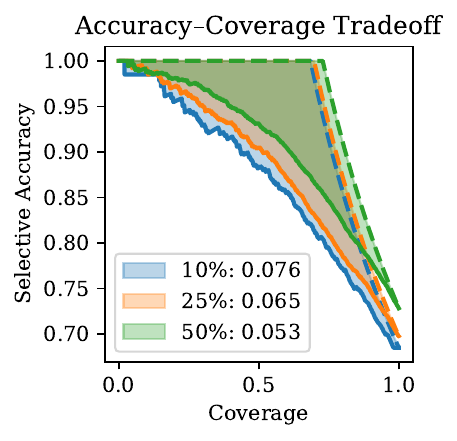}
    \caption{CIFAR-100N}
    \label{fig:right}
  \end{subfigure}
  \caption{\textbf{Experiments on Bayes error}. We find that irreducible noise significantly contributes to the gap. (a) We show the two moons dataset with varying degrees of noise $\sigma \in \{0.1,0.33,0.66,1.5\}$ as well as the corresponding accuracy-coverage tradeoffs. (b) + (c) Accuracy-coverage tradeoffs for the 10\% (blue), 25\% (orange), and 50\% (green) most noisy images in CIFAR-10N/100N, respectively.}
  \label{fig:exp_bayes}
\end{figure}

\emph{Findings.}
In terms of approximation error, Figure~\ref{fig:exp_appr} demonstrates that limited model capacity leads to larger gaps, while more expressive models yield tighter alignment with the perfect-ordering bound. This suggests that approximation error is a key driver of looseness. In terms of Bayes error, Figure~\ref{fig:exp_bayes} shows that increasing label noise consistently lowers the accuracy--coverage curve, indicating that Bayes error introduces an irreducible component to the gap. These results validate the canonical bound (Equation~\eqref{eq:emp-gap-ranking}): large Bayes or approximation error can explain substantial looseness.

\begin{table}[t]
\centering
\fontsize{9}{10}\selectfont
\setlength{\tabcolsep}{5pt}
\caption{\textbf{Experiments on calibration across model classes on CIFAR-100}. Temperature scaling (\temp) significantly improves ECE over the Maximum Softmax Probability (\msp) baseline but does not help to close the selective classification gap. Self-Adaptive Training (\sat) and Deep Ensembles (\de) improve calibration non-monotonically and also improve selective classification acceptance ordering through re-ranking. A corresponding plot is given in Figure~\ref{fig:cifar100_cal}; more datasets in Tables~\ref{tab:cifar10_cal}, \ref{tab:stanfordcars_cal}.}
\label{tab:cifar100_cal}
\vspace{5pt}
\setlength{\tabcolsep}{4.5pt}
\begin{tabular}{lcccccccccccc}
\toprule
 & \multicolumn{4}{c}{CNN} & \multicolumn{4}{c}{ResNet-18} & \multicolumn{4}{c}{WideResNet-50} \\
\cmidrule(r){2-5} \cmidrule(r){6-9} \cmidrule(r){10-13}
 & \msp & \temp & \sat & \de & \msp & \temp & \sat & \de & \msp & \temp & \sat & \de \\
\midrule
E-AURC & 0.086 & 0.085 & 0.081 & 0.065 & 0.033 & 0.033 & 0.028 & 0.026 & 0.031 & 0.032 & 0.028 & 0.026 \\
ECE & 0.142 & 0.008 & 0.116 & 0.019 & 0.052 & 0.048 & 0.026 & 0.034 & 0.066 & 0.050 & 0.046 & 0.030 \\
\bottomrule
\end{tabular}
\end{table}

\subsection{Q2: When—and what kind of—calibration helps?}
\label{sec:calibration_ranking_exp}

\emph{Setup.} 
We study the same three model classes as before on CIFAR‑100: a lightweight CNN, a ResNet‑18, and a WideResNet‑50. On each backbone we evaluate the following confidence–scoring variants: (i) maximum softmax probability (\msp)~\citep{hendrycks2016baseline}; (ii) a temperature‑scaled softmax (monotone probability calibration, \temp)~\citep{guo2017calibration}; (iii) self‑adaptive training (\sat)~\citep{huang2020self}, which implicitly calibrates by relabelling uncertain samples during training; and (iv) deep ensembles (\de)~\citep{lakshminarayanan2017simple} of five independently initialised networks (non‑monotone aggregation; improves ranking via variance). Our inclusion of the \msp baseline is motivated by the large-scale study of~\citet{jaeger2022call}, who find that \msp, while simple and easy to implement, is often hard-to-beat in practice. For each score we report (a) the weighted Expected Calibration Error (ECE); and (b) the Excess-AURC~(E-AURC)~\citep{geifman2018bias} metric measuring selective prediction performance.

\emph{Findings.}
We summarize our findings in Table~\ref{tab:cifar100_cal}. While temperature scaling (\temp) consistently improves ECE across model classes relative to \msp, it leaves the selective classification gap largely unchanged—highlighting the limitations of monotone calibration. In contrast, \sat slightly improves both ECE and gap by perturbing rankings through relabeling, while deep ensembles (\de) achieve the largest gap reductions by explicitly reordering predictions via averaging. These trends confirm that only methods capable of re-ranking—implicitly (\sat) or explicitly (\de)—can meaningfully improve selective performance. Consistent with this, we find that only \sat and \de models reliably predict their own loss, reinforcing their stronger alignment with correctness. See Appendix~\ref{sec:adv_experiments} for details.

\begin{figure}[t]
  \centering
  \begin{subfigure}[t]{0.49\textwidth}
  \centering
    \includegraphics[width=\linewidth]{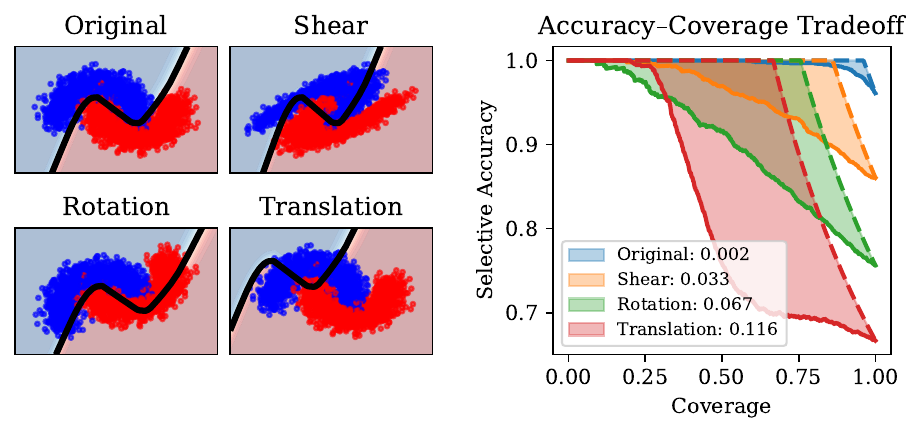}%
    \caption{Distribution shifts with two moons dataset.}
    \label{fig:left}
  \end{subfigure}%
  \begin{subfigure}[t]{0.24\textwidth}
    \centering
    \includegraphics[width=\linewidth]{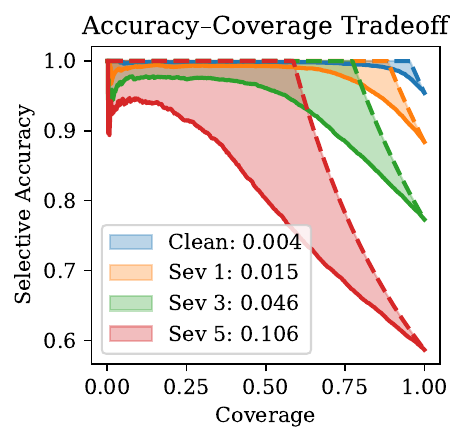} 
    \caption{CIFAR-10C}
    \label{fig:right}
  \end{subfigure}
  \begin{subfigure}[t]{0.24\textwidth}
    \centering
    \includegraphics[width=\linewidth]{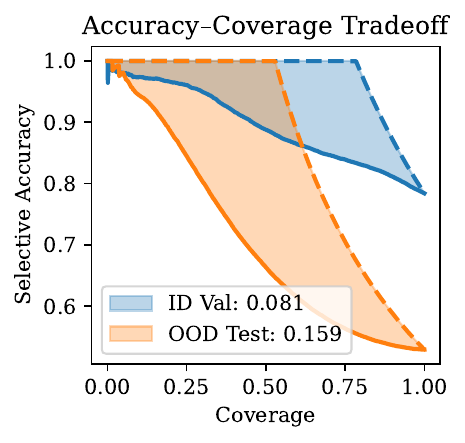} 
    \caption{Camelyon17-WILDS}
    \label{fig:right}
  \end{subfigure}
  \caption{\textbf{Experiments on distribution shifts}. We find that shifts can also significantly contribute to the gap. (a) Two moons under shear, rotation, and translation with corresponding accuracy–coverage curves. (b) CIFAR-10C across three distinct corruption severities. (c) Camelyon17 OOD shift.}
  \label{fig:exp_ds}
\end{figure}

\subsection{Q3: How does the gap evolve under distribution shift?}

\emph{Setup.} As in Q1, we explore this question using both synthetic and real-world distribution shifts. For synthetic experiments, we use the two moons dataset with three types of input shift: shear, rotation, and translation (details in Appendix~\ref{app:twomoons-shifts}). For real data with synthetic corruptions, we use CIFAR-10C~\citep{hendrycks2019robustness}, which applies algorithmic covariate corruptions to the CIFAR-10 test set across five severity levels (1–5). To evaluate under a real distribution shift, we also consider Camelyon17-WILDS~\citep{koh2021wilds}---a cancer detection dataset where test data is collected from a different hospital system than the training data.

\emph{Findings.} Figure~\ref{fig:exp_ds} shows a clear trend: as covariate shifts intensify, the accuracy–coverage curve moves farther below its oracle bound, indicating that abstention no longer isolates easy inputs. Selective classifiers thus become \emph{over-confidently wrong}, echoing evidence that many uncertainty metrics deteriorate under shift or misspecification~\citep{ovadia2019can}. As the gap grows with shift severity, deployments must pair selective prediction with robust ranking or shift-detection safeguards.

\section{Conclusion}
\label{sec:conclusion}

Building a truly performant selective classifier hinges on understanding and closing the gap between practical models and the oracle perfect-ordering bound.  To answer \emph{what it takes}, we introduce a coverage-uniform selective-classification gap and derive the first finite-sample decomposition that pinpoints exactly five limiting factors: three intrinsic sources—Bayes noise, approximation error, and ranking (calibration) error—and two contingent slack terms—sampling variability and implementation or distribution-shift artifacts.  Our experiments show that each component can be individually measured and, importantly, directly improved: stronger model backbones reduce approximation error, non-monotone or feature-aware scoring shrinks ranking error, and shift-robust training with larger validation sets minimizes residual slack.  Together, these insights provide a clear recipe for designing and evaluating high-performance selective classifiers.

\paragraph{Limitations and future work.}
While our decomposition cleanly bounds the selective‑classification gap, its error budgets can \emph{interact}—for example, increasing capacity often improves both approximation and ranking—which makes unique attribution challenging. Many \emph{training‑time calibration schemes} (e.g., \sat, mixup, focal loss) simultaneously affect ranking and full‑coverage accuracy, confounding the separation of budgets. Our core experiments focus on \emph{synthetic and vision benchmarks}; extending these insights to large‑scale foundation models would be an important direction. We present a preliminary exploration on large language models in Appendix~\ref{app:llm}. Finally, because our oracle bound and gap are defined for \emph{0–1 loss}, adapting to \emph{asymmetric or class‑dependent cost functions}---often required in high-stakes decision-making---will require generalizing both the bound and its decomposition. Our finite-sample gap decomposition lays the groundwork for a more unified reliability framework; extending it to (i) settings where out-of-distribution inputs must be rejected and (ii) open-ended language generation constitutes a promising agenda for future work.

\newpage
\section*{Acknowledgements}

We acknowledge the following sponsors, who support our research with financial and in-kind contributions: Apple, CIFAR through the Canada CIFAR AI Chair, Meta, NSERC through the Discovery Grant and an Alliance Grant with ServiceNow and DRDC, the Ontario Early Researcher Award, the Schmidt Sciences foundation through the AI2050 Early Career Fellow program. Resources used in preparing this research were provided, in part, by the Province of Ontario, the Government of Canada through CIFAR, and companies sponsoring the Vector Institute. We thank Relu Patrascu and the computing team at the University of Toronto's Computer Science Department for administrating and procuring the compute infrastructure used for the experiments in this paper. We would also like to thank Andy Wei Liu, Anvith Thudi, David Glukhov, Vardan Papyan, and many others at the Vector Institute for discussions contributing to this paper.

\newpage

\appendix
\section{Broader Impact}
\label{sec:broader_impact}

This work introduces a decomposition of the selective-classification gap into measurable components—Bayes noise, approximation error, ranking error, statistical noise, and deployment slack—offering practical guidance for improving abstaining classifiers. By diagnosing which source dominates in a given setting, our method supports more targeted model design and evaluation.

\paragraph{Positive implications.}
Our decomposition improves transparency and supports safer deployment in high-stakes domains by helping practitioners understand whether their model underperforms due to ranking, capacity, or robustness. Because each gap component is explicitly quantified, our approach can serve as a tool for model debugging, monitoring, and fairer benchmarking.

\paragraph{Potential risks.}
Selective classifiers may disproportionately defer on certain groups, amplifying disparities—a risk previously observed by~\citet{jones2020selective}. Additionally, institutions may exploit uncertainty estimates to justify \emph{strategic abstention}—deliberately deferring on individuals they prefer not to serve~\citep{rabanser2025confidential}. While our framework identifies which part of the gap drives poor performance, it does not control how deferred inputs are handled.

\paragraph{Mitigations.}
We recommend reporting gap components disaggregated by sensitive attributes, auditing scoring functions for spurious correlations, and documenting fallback policies. These steps are essential to ensure that abstention mechanisms improve reliability without undermining fairness.

\paragraph{Outlook.}
We hope this work encourages more precise evaluations of selective classifiers, shifting focus from aggregate calibration to interpretable, component-wise gap analysis that can inform both technical improvements and policy safeguards.

\section{Methods Extension}
\label{sec:meth_ext}

\subsection{Detailed Proof of Theorem~\ref{thm:gap}}
\label{app:proof-gap-ranking}

We restate the theorem for convenience.

\begin{theorem}[Selective classification Gap; detailed]
Fix a coverage level \(c\in(0,1]\), a score function \(g(\cdot,h)\),
and its associated population threshold
\(t_c\) satisfying \(\Pr\bigl(g(X,h)\ge t_c\bigr)=c\).
Define the accepted region \(A_c:=\{x:g(x,h)\ge t_c\}\) and
the oracle region
\(A_c^{\star}:=\{x:\eta_h(x)\text{ is among the largest }c\text{-fraction}\}\).
With the error terms
\begin{align}
\varepsilon_{\mathrm{Bayes}}(c)
&= \mathbb{E}\left[1 - \max\{\eta(X), 1 - \eta(X)\} \mid X \in A_c \right], \\
\varepsilon_{\mathrm{approx}}(c)
&= \mathbb{E}\left[ \lvert \eta_h(X) - \eta(X) \rvert \mid X \in A_c \right], \\
\varepsilon_{\mathrm{rank}}(c)
&= \mathbb{E}\left[ \eta_h(X) \mid X \in A_c^{\star} \right]
 - \mathbb{E}\left[ \eta_h(X) \mid X \in A_c \right] \;\;(\ge 0),
\end{align}
the population gap satisfies
\begin{equation}
\label{eq:pop-gap-app}
\Delta(c)=\overline{\operatorname{acc}}\bigl(a_{\mathrm{full}},c\bigr)
-\operatorname{acc}_c(h,g)
\;\le\;
\varepsilon_{\mathrm{Bayes}}(c)
+\varepsilon_{\mathrm{approx}}(c)
+\varepsilon_{\mathrm{rank}}(c).
\end{equation}
Moreover, let \(\widehat{\Delta}(c)\) be the empirical gap computed on
\(n\) independent test samples.  Then for any
\(\delta\in(0,1)\), with probability at least \(1-\delta\),
\begin{equation}
\label{eq:emp-gap-app}
\widehat{\Delta}(c)
\;\le\;
\varepsilon_{\mathrm{Bayes}}(c)
+\varepsilon_{\mathrm{approx}}(c)
+\varepsilon_{\mathrm{rank}}(c)
+ C\sqrt{\frac{\log(1/\delta)}{n}},
\end{equation}
where \(C>0\) is an absolute constant.
\end{theorem}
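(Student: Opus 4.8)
The plan is to prove the population inequality~\eqref{eq:pop-gap-app} by a two-step telescoping argument and then obtain the finite-sample version~\eqref{eq:emp-gap-app} by a single concentration step that controls the data-dependent acceptance set and the conditional means.

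\emph{Step 1: reduce selective accuracy to a conditional posterior mean.} By the tower rule, $\mathrm{acc}_c(h,g)=\Pr(h(X)=Y\mid X\in A_c)=\mathbb{E}[\mathbb{I}\{h(X)=Y\}\mid X\in A_c]=\mathbb{E}[\eta_h(X)\mid X\in A_c]$. Since $\Pr(h(X)=Y,\ X\in A_c)\le\Pr(h(X)=Y)=a_{\mathrm{full}}$ and $\Pr(X\in A_c)=c$, this already gives $\mathrm{acc}_c(h,g)\le\min\{1,a_{\mathrm{full}}/c\}=\overline{\mathrm{acc}}(a_{\mathrm{full}},c)$, so $\Delta(c)\ge0$ unconditionally; the content of the theorem is the matching upper bound.

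\emph{Step 2: telescope through the oracle acceptance set.} Write $\Delta(c)=\bigl(\overline{\mathrm{acc}}(a_{\mathrm{full}},c)-\mathbb{E}[\eta_h\mid A_c^{\star}]\bigr)+\bigl(\mathbb{E}[\eta_h\mid A_c^{\star}]-\mathbb{E}[\eta_h\mid A_c]\bigr)$. The second parenthesis is exactly $\varepsilon_{\mathrm{rank}}(c)$, and it is nonnegative because $A_c^{\star}$ is the mass-$c$ superlevel set of $\eta_h$, which maximizes $\mathbb{E}[\eta_h\,\mathbb{I}\{X\in S\}]$ over all events $S$ with $\Pr(S)=c$ (a Hardy--Littlewood / bathtub rearrangement fact). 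For the first parenthesis I would use $\overline{\mathrm{acc}}(a_{\mathrm{full}},c)\le1$ to bound it by $1-\mathbb{E}[\eta_h\mid A_c^{\star}]=\mathbb{E}[1-\eta_h\mid A_c^{\star}]$, then split the integrand pointwise as $1-\eta_h=\bigl(1-\max\{\eta,1-\eta\}\bigr)+\bigl(\max\{\eta,1-\eta\}-\eta_h\bigr)$, both summands nonnegative: the first contributes the Bayes term, and the second is the pointwise excess $0$--$1$ risk of $h$ relative to the Bayes rule, which I would control by the posterior deviation $|\eta_h-\eta|$ and hence by $\varepsilon_{\mathrm{approx}}$. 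Summing the three pieces yields~\eqref{eq:pop-gap-app}.

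\emph{Step 3: from population to finite sample.} The empirical gap $\widehat\Delta(c)$ differs from $\Delta(c)$ only through (i) the empirical quantile $\hat t_c$ replacing $t_c$, hence a perturbed acceptance set $\widehat A_c$, and (ii) the conditional means being replaced by sample averages over $\widehat A_c$. I would control (i) via the Dvoretzky--Kiefer--Wolfowitz inequality~\citep{massart1990tight} applied to the one-dimensional CDF of the scores $g(X,h)$: with probability $1-\delta$, $\sup_t|\widehat F(t)-F(t)|\le\sqrt{\log(2/\delta)/(2n)}$, which bounds $\Pr(X\in\widehat A_c\,\triangle\,A_c)$ and the gap between target and achieved empirical coverage at order $\sqrt{\log(1/\delta)/n}$. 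For (ii), each conditional mean is an average of $[0,1]$-valued quantities, so Hoeffding (or Bernstein) together with the coverage lower bound $c$ on the denominator gives an $O(\sqrt{\log(1/\delta)/n})$ deviation, cf.\ \citet{shalev2014understanding}. A union bound over the finitely many empirical quantities appearing in the decomposition then gives $|\widehat\Delta(c)-\Delta(c)|\le C\sqrt{\log(1/\delta)/n}$ with probability $1-\delta$; combining with~\eqref{eq:pop-gap-app} yields~\eqref{eq:emp-gap-app}.

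\emph{Main obstacle.} The delicate point is Step~3's handling of the data-dependent acceptance set: the empirical selective accuracy is a ratio whose numerator and denominator are both random and whose threshold is itself an empirical quantile, so a single Hoeffding bound on a fixed random variable does not suffice---one needs the uniform (DKW) control over thresholds, together with a mild regularity condition on the score distribution near $t_c$ (or a reformulation in terms of the achieved empirical coverage), to keep the residual at order $\sqrt{\log(1/\delta)/n}$. A secondary subtlety in Step~2 is bookkeeping which acceptance set ($A_c$ versus $A_c^{\star}$) the Bayes and approximation conditional expectations are evaluated over; the cleanest route is to derive the bound with those two terms on $A_c^{\star}$ and then argue that passing the conditioning to $A_c$ only redistributes mass already charged to $\varepsilon_{\mathrm{rank}}(c)$, leaving~\eqref{eq:pop-gap-app} intact.
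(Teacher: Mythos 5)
Your overall skeleton is the same as the paper's: write $\Delta(c)$ as $\bigl(\overline{\mathrm{acc}}(a_{\mathrm{full}},c)-\E[\eta_h\mid A_c^{\star}]\bigr)+\varepsilon_{\mathrm{rank}}(c)$, argue nonnegativity of the rank term by the rearrangement property of the mass-$c$ superlevel set, absorb the leftover into the Bayes and approximation terms, and finish with concentration. Your Step~3 is in fact more careful than the appendix (which only invokes Hoeffding on "the three empirical estimates"); the DKW control of the empirical quantile is exactly the missing ingredient there, so no complaint on that front.

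The genuine gaps are both in Step~2. First, the pointwise inequality you rely on—controlling the excess risk $\max\{\eta,1-\eta\}-\eta_h$ by $|\eta_h-\eta|$—is false. Take $h(x)=1$ with $\eta(x)=0.3$: then $\eta_h(x)=\eta(x)$, so $|\eta_h(x)-\eta(x)|=0$, yet $\max\{\eta,1-\eta\}-\eta_h=0.4$. In general $|\eta_h-\eta|$ vanishes whenever $h(x)=1$ regardless of whether $h$ agrees with the Bayes rule, so $\varepsilon_{\mathrm{approx}}$ as defined cannot dominate the pointwise excess risk; the paper instead bounds the leftover by $\E[\eta_h-\mathbb{I}_{\{h=Y\}}\mid A_c]$ and splits it by adding and subtracting $\eta$ (so the $(\eta_h-\eta)$ summand, not the excess risk, is what gets charged to $\varepsilon_{\mathrm{approx}}$). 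Second, your Bayes and approximation contributions come out conditioned on $A_c^{\star}$, whereas the theorem defines them on $A_c$, and the proposed fix—that moving the conditioning "only redistributes mass already charged to $\varepsilon_{\mathrm{rank}}$"—does not go through as stated: $\varepsilon_{\mathrm{rank}}(c)$ is a difference of conditional means of $\eta_h$, not of $\min\{\eta,1-\eta\}$ or $|\eta_h-\eta|$, so the discrepancy $\E[\,\cdot\mid A_c^{\star}]-\E[\,\cdot\mid A_c]$ for those integrands is not covered by the rank term and would need its own bound (e.g., via the mis-ordered mass $D_{\mathrm{rank}}(c)$). Both issues are fixable, but as written the population bound \eqref{eq:pop-gap-app} with the stated definitions does not follow from your Step~2.
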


\begin{proof}
We split the argument into four self‑contained steps.

\textbf{Step 0.  Oracle upper bound revisited.}
For completeness we justify the piecewise form of
\(\overline{\operatorname{acc}}\bigl(a_{\mathrm{full}},c\bigr)\)
in Definition~\ref{def:poub}.
Because
\(a_{\mathrm{full}}=\Pr(h(X)=Y)=\mathbb E[\eta_h(X)]\),
the set
\(\{x:\eta_h(x)=1\}\) has probability mass at least
\(a_{\mathrm{full}}\).  Hence an oracle that retains the
highest‑confidence points achieves perfect accuracy for all
coverages \(c\le a_{\mathrm{full}}\).  For \(c>a_{\mathrm{full}}\),
the best it can do is include \emph{only} those perfect points
plus a \((c-a_{\mathrm{full}})\)-fraction of the remaining
examples, which contribute at worst zero accuracy.  Therefore
\begin{equation}
\overline{\operatorname{acc}}(a_{\mathrm{full}},c)=
\frac{a_{\mathrm{full}}}{c},
\qquad
a_{\mathrm{full}}<c<1.
\end{equation}

\textbf{Step 1.  Algebraic decomposition of the gap.}
Recall that
\(\operatorname{acc}_c(h,g)
   =\mathbb E[\eta_h(X)\mid X\in A_c]\).
We repeatedly add and subtract the same quantity:
\begin{align}
\Delta(c)
&:=\overline{\operatorname{acc}}(a_{\mathrm{full}},c)
  -\operatorname{acc}_c(h,g)  \notag\\[2pt]
&=\overline{\operatorname{acc}}(a_{\mathrm{full}},c)
  -\mathbb E[\eta_h\mid A_c^{\star}]
  \;+\;\mathbb E[\eta_h\mid A_c^{\star}]
  -\mathbb E[\eta_h\mid A_c] \notag\\
&\le
  \mathbb E[\eta_h\mid A_c^{\star}]
  -\mathbb E[\eta_h\mid A_c]                   \tag{rank}\\
&\quad
  +\,\mathbb E[\eta_h-\mathbb{I}_{\{h=Y\}}\mid A_c] \tag{approx+Bayes}\\
&= \varepsilon_{\mathrm{rank}}(c)
   +\varepsilon_{\mathrm{approx}}(c)
   +\varepsilon_{\mathrm{Bayes}}(c).
\end{align}
\textbf{Explanation of the two labelled inequalities.}
\begin{enumerate}
   
   \item \textbf{(rank)} isolates the ranking error, $\varepsilon_{\text{rank}}(c) := \mathbb{E}[\eta_h \mid A_c^{\star}] - \mathbb{E}[\eta_h \mid A_c]$.  The inequality holds because the remaining term from the previous line, $\overline{\operatorname{acc}}(a_{\text{full}}, c) - \mathbb{E}[\eta_h \mid A_c^{\star}]$, is a non-negative quantity that is bounded by the error sources introduced next.
   
   \item \textbf{(approx+Bayes)} adds and subtracts \(\eta(X)\) inside the
   expectation, then splits the absolute value:
   \begin{equation}
   \eta_h-I_{\{h=Y\}}
   =(\eta_h-\eta)+(\eta-I_{\{h=Y\}}).
   \end{equation}
   The second summand satisfies the deterministic bound
   \(
     |\eta(X)-I_{\{h=Y\}}|
     = \max\{\eta,1-\eta\}-I_{\{h=Y\}}
     \le 1-\max\{\eta,1-\eta\},
   \)
   yielding exactly \(\varepsilon_{\mathrm{Bayes}}(c)\).
   The first summand contributes
   \(\varepsilon_{\mathrm{approx}}(c)\).
\end{enumerate}

\textbf{Step 2.  Non‑negativity of \(\varepsilon_{\mathrm{rank}}(c)\).}
Because \(\eta_h(X)\in[0,1]\) and
\(A_c^{\star}\) contains the \(c\)-fraction of points with the
largest \(\eta_h\)-values,
\(\mathbb E[\eta_h\mid A_c^{\star}]
 \ge\mathbb E[\eta_h\mid A_c]\),
hence \(\varepsilon_{\mathrm{rank}}(c)\ge0\) as stated.

\textbf{Step 3.  Finite‑sample deviation.}
Let \(\widehat{\mu}\) be any empirical average of a
\([0,1]\)-valued random variable with expectation \(\mu\).
Hoeffding’s inequality gives
\(\Pr(\lvert\widehat{\mu}-\mu\rvert>\epsilon)
   \le 2e^{-2n\epsilon^2}\).
Apply this bound separately to the three empirical estimates that
constitute \(\widehat{\Delta}(c)\), and take a union bound with
\(\epsilon=\sqrt{\tfrac{\log(6/\delta)}{2n}}\).
This yields, with probability at least \(1-\delta\),
\(
  \lvert\widehat{\Delta}(c)-\Delta(c)\rvert
  \le C\sqrt{\log(1/\delta)/n}
\)
for an absolute constant \(C\).
Combining with \eqref{eq:pop-gap-app} proves
\eqref{eq:emp-gap-app}.

\textbf{Step 4.  Connection to ranking distance.}
Define the mass of mis‑ordered points
\(D_{\mathrm{rank}}(c):=
 \Pr\bigl(X\in A_c^{\star}\setminus A_c\bigr)
 +\Pr\bigl(X\in A_c\setminus A_c^{\star}\bigr)\).
Because \(\eta_h\in[0,1]\),
\begin{align}
\varepsilon_{\mathrm{rank}}(c)
&=\mathbb E[\eta_h\mid A_c^{\star}]
  -\mathbb E[\eta_h\mid A_c]   \\[2pt]
&\le\bigl\|\eta_h\bigr\|_{\infty}\,
       D_{\mathrm{rank}}(c)        \\[2pt]
&\le D_{\mathrm{rank}}(c).
\end{align}
Hence \(\varepsilon_{\mathrm{rank}}(c)=0\)
if and only if \(A_c=A_c^{\star}\).

\smallskip\noindent
This completes the proof.
\end{proof}

\paragraph{Multiclass remark.}
For \(K>2\) labels, define
\(\eta(x)=\bigl(\Pr(Y=1\mid x),\dots,\Pr(Y=K\mid x)\bigr)\)
and its complement confidence
\(\eta^{\max}(x)=\max_{k}\eta_k(x)\).
Then the inequality
\(\lvert\eta^{\max}-I_{\{h=Y\}}\rvert
 \le 1-\eta^{\max}\)
replaces the binary bound above, and the rest of the argument
goes through verbatim.  The approximation term becomes
\(\mathbb E[\lVert\eta_h-\eta\rVert_1\mid A_c]\);
all other quantities are unchanged.

\subsection{When Can Temperature Scaling Re-rank Confidence Scores?}
\label{app:ts-rerank}

Temperature scaling multiplies every logit by the same factor
$1/T\;(T>0)$ before the softmax,
\begin{equation}
p^{(T)}_j(x)= \frac{\exp(z_j(x)/T)}{\sum_k \exp(z_k(x)/T)}.
\end{equation}
Although the predicted label $\arg\max_j z_j(x)$ is invariant to~$T$,
the \emph{confidence score}
$s_T(x)=\max_j p^{(T)}_j(x)$
can change its \emph{cross-sample} ordering.

\paragraph{General form.}
Let $j_\star=\arg\max_j z_j(x)$ and
$r_j(x)=\exp\bigl(z_j(x)-z_{j_\star}(x)\bigr)\;(j\ne j_\star)$.
Then
\begin{equation}
  s_T(x)=\frac{1}{1+\sum_{j\ne j_\star} r_j(x)^{1/T}}.
  \label{eq:st-general}
\end{equation}
For binary classification, the sum has a single term and
\eqref{eq:st-general} collapses to the familiar logistic form
$s_T(x)=1/(1+e^{-\Delta/T})$ with
$\Delta=z_{j_\star}-z_{3-j_\star}$.

\paragraph{Two-sample condition.}
For two inputs $x_1,x_2$ let
$S_i(T)=\sum_{j\ne j_\star^{(i)}}r_{ij}^{1/T}$.
Because each $r_{ij}\le 1$, every $r_{ij}^{1/T}$ is monotone non-decreasing in $T$ (strictly increasing unless there is a tie),
and the ordering $s_T(x_1)>s_T(x_2)$ can change
exactly at those temperatures $T$ where $S_1(T) = S_2(T)$.

\paragraph{Illustrative example ($K=3$).}
\begin{equation}
z^{(1)}=(-2,-3,-3),\quad z^{(2)}=(0,-0.1,-3).
\end{equation}
At $T=1$ one finds
$s_1(x_1)=0.576>0.512=s_1(x_2)$,
while at $T=3$ we see that
$s_3(x_1)=0.411<0.428=s_3(x_2)$,
so temperature scaling would now accept $x_2$ before $x_1$.

\paragraph{How likely is a swap?}
Equation~\eqref{eq:st-general} shows that a swap requires the
one-dimensional curves $S_1(T)$ and $S_2(T)$ to intersect.  Since the
curves are continuous and monotone, the intersection occurs—
if at all—at isolated temperatures and only when the competing logit
patterns are finely tuned.

\paragraph{Practical implication.}
Temperature scaling can \emph{in principle} tighten the
selective-classification gap, but only for the vanishingly small subset
of inputs whose non-maximum logits happen to satisfy
$S_1(T^\star)=S_2(T^\star)$.  To obtain a meaningful re-ordering one
must therefore adopt \emph{non-monotone} calibration strategies.

\subsection{Additional Contingent Slack}
\label{app:extra-slack-omitted}

In the main text (Sec.~\ref{sec:extra-slack-short}) we folded all implementation‐level imperfections into a single residual term \(\varepsilon_{\text{misc}}(c)\), retaining only optimization error and distribution shift explicitly. Here we list two further slack terms omitted there:

\begin{enumerate}[leftmargin=1.2em]
  \setcounter{enumi}{2}
  \item \textbf{Threshold‐selection noise \(\varepsilon_{\text{thr}}(c)\).}\\
    When the coverage threshold \(\hat t_c\) is chosen on a validation set of size \(m\), the realized coverage deviates from the target \(c\) by 
    \begin{equation}
      O\bigl(\sqrt{c(1-c)/m}\bigr),
    \end{equation}
    inducing a corresponding vertical shift in selective accuracy.

  \item \textbf{Tie‐breaking / score quantization \(\varepsilon_{\text{tie}}(c)\).}\\
    Discrete confidence values (e.g.\ low‐precision logits) create equivalence classes of samples with identical scores.  If \(\kappa\) denotes the maximum number of tied samples at any score level, then
    \begin{equation}
      \varepsilon_{\text{tie}}(c)
      \;\le\;
      \frac{\kappa}{n},
    \end{equation}
    where \(n\) is the size of the evaluation set.
\end{enumerate}

\noindent\textbf{Residual slack revisited.}  Together with optimization error \(\varepsilon_{\text{opt}}\) and shift \(\varepsilon_{\text{shift}}(c)\), these yield
\begin{equation}
  \varepsilon_{\text{misc}}(c)
  = \varepsilon_{\text{opt}}
  + \varepsilon_{\text{shift}}(c)
  + \varepsilon_{\text{thr}}(c)
  + \varepsilon_{\text{tie}}(c).
\end{equation}

\section{Practitioner Checklist for Tightening the Selective-Classification Gap}
\label{app:practical-checklist}

Below is an expanded, actionable checklist to help practitioners systematically tackle each component of the selective-classification gap.  For each item, we list concrete steps, recommended tools, and pointers to reduce the corresponding error term.

\begin{itemize}[leftmargin=1.5em]

  \item \textbf{\(\varepsilon_{\text{approx}}\) — Shrink Approximation Error}
    \begin{itemize}[leftmargin=1.25em]
      \item \emph{Model capacity:} Upgrade to deeper or wider architectures like ResNeXt, ViT, or ConvNeXt to better approximate complex functions and reduce base error \citep{xie2017aggregated, dosovitskiy2020image, liu2022convnet, kadavath2022language}. 
      \item \emph{Pre-training:} Initializing with rich features from self-supervised methods (SimCLR, BYOL) or foundation models (CLIP, DINO) can improve out-of-the-box performance, convergence, and uncertainty scores \citep{chen2020simple, grill2020bootstrap, radford2021learning, caron2021emerging, hendrycks2019using}. However, pre-training can also sometimes negatively affect selective classification performance \citep{galil2023can}.
      \item \emph{Distillation:} Use teacher–student training with logit matching or feature hints to inherit accuracy from a larger model at lower cost \citep{galil2023can, hinton2015distilling, dietmuller2024fitnets}.
      \item \emph{Data augmentation:} Augmentations can often improve generalization with policy-based (AutoAugment, RandAugment) or mixing-based (MixUp, CutMix) augmentations to regularize the learner \citep{cubuk2018autoaugment, cubuk2020randaugment, zhang2017mixup, yun2019cutmix}. However, strong augmentations may also degrade selective classification performance for certain minority classes \citep{jones2020selective}.
    \end{itemize}

  \item \textbf{\(\varepsilon_{\text{rank}}\) — Improve Ranking Calibration:}
    \begin{itemize}[leftmargin=1.25em]
      \item \emph{Feature-aware scoring:} Train auxiliary heads like ConfidNet to learn correctness scores using both logits and input features \citep{corbiere2019addressing}, often improving uncertainty estimates. Self-Adaptive Training (SAT) further enhances this by encouraging internal representations to separate correct and incorrect predictions through contrastive regularization or supervised signals \citep{huang2020self}.
      \item \emph{Deep ensembles:} Use the disagreement or predictive entropy across multiple independently trained models to estimate uncertainty \citep{lakshminarayanan2017simple}.
      \item \emph{Conformal methods:} Generate conformal p-values or risk-controlled selection sets that respect desired coverage guarantees \citep{vovk2005algorithmic, angelopoulos2022conformal}.
      \item \emph{Use caution with vector/Dirichlet scaling:} While previous work has shown that vector, matrix, or Dirichlet transformations can be beneficial to reshape confidence distributions \citep{guo2017calibration,kull2019beyond}, \citet{le2024confidence} shows that these techniques can harm ranking under a large number of classes.
    \end{itemize}

  \item \textbf{\(\varepsilon_{\text{opt}}\) — Reduce Optimization Error:}
    \begin{itemize}[leftmargin=1.25em]
      \item \emph{Convergence diagnostics:} Track training/validation loss curves to detect underfitting and determine optimal stopping points \citep{goodfellow2016deep}.
      \item \emph{Learning-rate schedules:} Employ dynamic LR strategies like cosine decay, OneCycle, or CLR to reach better optima more consistently \citep{loshchilov2016sgdr, smith2019super, smith2017cyclical}.
      \item \emph{Early stopping / checkpoints:} Save and average late-stage checkpoints or use snapshot ensembling to smooth optimization variance \citep{huang2017snapshot, lakshminarayanan2017simple, rabanser2022selective}.
      \item \emph{Regularization:} Use dropout, weight decay, or stochastic depth to prevent overfitting and stabilize training \citep{srivastava2014dropout, huang2016deep, loshchilov2017fixing}.
    \end{itemize}

  \item \textbf{\(\varepsilon_{\text{Bayes}}\) — Quantify Irreducible Noise:}
    \begin{itemize}[leftmargin=1.25em]
      \item \emph{Repeated labels:} Collect multiple annotations (e.g., CIFAR-10H) to estimate human-level disagreement and the Bayes error floor \citep{peterson2019human, wei2021learning}.
      \item \emph{Noise-robust training:} Mitigate label noise using bootstrapped or Taylor-truncated loss functions that temper reliance on hard labels \citep{reed2014training, feng2021can}.
      \item \emph{Dataset curation:} Apply confident learning to flag likely label errors or use active learning for data relabeling \citep{northcutt2021confident}.
    \end{itemize}

  \item \textbf{\(\varepsilon_{\text{stat}}\) — Control Statistical Slack:}
    \begin{itemize}[leftmargin=1.25em]
      \item \emph{Validation set size:} Use a sufficiently large holdout set to estimate thresholds and calibrate uncertainty reliably \citep{hart2001pattern}.
      \item \emph{Confidence intervals:} Use DKW or Clopper–Pearson bounds to set conservative thresholds with statistical guarantees on coverage \citep{massart1990tight, clopper1934use}.
      \item \emph{Cross-validation:} Average selection thresholds over folds to reduce their variance and avoid overfitting to a single validation set \citep{kohavi1995study}.
    \end{itemize}

  \item \textbf{\(\varepsilon_{\text{shift}}\) — Mitigate Distribution Shift:}
    \begin{itemize}[leftmargin=1.25em]
      \item \emph{Shift detection:} Detect covariate shift via statistical two-sample tests such as MMD or KL divergence between feature distributions \citep{gretton2012kernel, rabanser2019failing}.
      \item \emph{Importance weighting:} Correct mismatched data distributions with density ratio weighting, e.g., using kernel mean matching \citep{huang2006correcting}.
      \item \emph{Domain adaptation:} Finetune with in-domain examples or use unsupervised techniques like AdaBN or domain-adversarial training (DANN) \citep{ganin2016domain, li2016revisiting}.
      \item \emph{Test-time adaptation:} Adapt models at inference using entropy minimization (Tent) or batch norm recalibration to restore accuracy under shift \citep{nado2021uncertainty, wang2020tent}.
    \end{itemize}

  \item \textbf{\(\varepsilon_{\text{thr}}\) — Threshold–Selection Noise:}
    \begin{itemize}[leftmargin=1.25em]
      \item \emph{Bootstrap resampling:} Estimate variability in the selection threshold $\tau_c$ by computing its standard error across bootstrap samples \citep{tibshirani1993introduction}.
      \item \emph{Smooth thresholds:} Interpolate between adjacent scores or accept a random subset at the threshold to reduce coverage discontinuities \citep{angelopoulos2021gentle}.
    \end{itemize}

  \item \textbf{\(\varepsilon_{\text{tie}}\) — Tie-Breaking \& Score Quantization:}
    \begin{itemize}[leftmargin=1.25em]
      \item \emph{Higher precision:} Use higher float precision (e.g., FP32 or FP64) or more logits bits to distinguish close scores and avoid ties \citep{micikevicius2017mixed}.
      \item \emph{Dithering:} Add tiny random noise to scores before thresholding to stochastically resolve ties and reduce instability.
      \item \emph{Refrain from binning:} Histogram binning (HQ) or Bayesian Binning into Quantiles (BBQ) often improve calibration but not selective classification performance \citep{naeini2015obtaining,le2024confidence}.

    \end{itemize}

\end{itemize}

\noindent\textbf{Putting it all together.}  
After addressing each bullet above, recompute your selective accuracy–coverage curve and compare to the oracle bound (Def.~\ref{def:poub}).  Iterating over these steps will systematically shrink \(\widehat\Delta(c)\) toward its irreducible floor.

\section{Experimental Details}
\label{sec:exp_det}

\subsection{Computational Resources}
\label{app:comp_res}

Our experiments were conducted on a mix of GPU-equipped compute nodes with varying hardware configurations. Some machines are equipped with Intel Xeon Silver CPUs (10 cores, 20 threads) and 128GB of RAM, each hosting 4× NVIDIA GeForce RTX 2080 Ti GPUs with 11GB VRAM. Others feature AMD EPYC 7643 processors (48 cores, 96 threads), 512GB of RAM, and 4× NVIDIA A100 GPUs, each with 80GB VRAM.

\subsection{Hyper-Parameters}
\label{app:hyperparams}

We follow standard literature-recommended training settings across all datasets. For each architecture–dataset pair, we use a fixed learning rate, weight decay, and batch size as detailed below:

\begin{itemize}[leftmargin=1em]
    \item \textbf{SimpleCNN:}
    \begin{itemize}[leftmargin=1em]
        \item Learning rate: 0.01
        \item Weight decay: \(1\times10^{-4}\)
        \item Batch size: 128
    \end{itemize}

    \item \textbf{ResNet-18:}
    \begin{itemize}[leftmargin=1em]
        \item Learning rate: 0.1 for CIFAR datasets; 0.01 for Stanford Cars, Camelyon17
        \item Weight decay: \(5\times10^{-4}\)
        \item Batch size: 128
    \end{itemize}

    \item \textbf{WideResNet-50-2:}
    \begin{itemize}[leftmargin=1em]
        \item Same settings as ResNet-18
    \end{itemize}

    \item \textbf{Epochs:}
    \begin{itemize}[leftmargin=1em]
        \item 200 epochs for all datasets except Camelyon17, which uses 10
    \end{itemize}

    \item \textbf{Optimization:} SGD with momentum 0.9, Nesterov enabled, and a cosine annealing learning rate schedule.

    \item \textbf{Selective prediction methods:}
    \begin{itemize}[leftmargin=1em]
        \item \texttt{MSP}: Standard cross-entropy training
        \item \texttt{SAT}: Cross-entropy pretraining for half of training epochs, followed by Self-Adaptive Training (momentum \(0.9\)) with an extra abstain class
    \end{itemize}
\end{itemize}

All experiments use fixed random seeds for reproducibility and standard data augmentation per dataset (random crops, flips, normalization).

\subsection{SimpleCNN Architecture}
\label{app:simplecnn}

\noindent
The SimpleCNN model is a compact convolutional neural network used for experiments on lower-resolution image datasets. The architecture is defined by the following sequence of layers:
\begin{itemize}
  \item A $3 \times 3$ convolution with 32 filters and padding 1, followed by ReLU and $2 \times 2$ max-pooling.
  \item A second $3 \times 3$ convolution with 64 filters and padding 1, followed by ReLU and $2 \times 2$ max-pooling.
  \item A flattening layer, followed by a fully connected layer with 128 hidden units and ReLU activation.
  \item A final fully connected layer projecting to the number of output classes.
\end{itemize}

Let \( s = \texttt{input\_size} // 4 \) denote the spatial resolution after two $2\times2$ pooling layers. Then, the full model is:
\[
\begin{aligned}
\texttt{SimpleCNN}(x) =\;&
\texttt{Linear}\big(128 \to \texttt{num\_classes}\big) \circ \texttt{ReLU} \circ \\
&\texttt{Linear}\big(64 \cdot s^2 \to 128\big) \circ \texttt{Flatten} \circ \\
&\texttt{MaxPool2d} \circ \texttt{ReLU} \circ \texttt{Conv2d}(32 \to 64) \circ \\
&\texttt{MaxPool2d} \circ \texttt{ReLU} \circ \texttt{Conv2d}(3 \to 32)(x)
\end{aligned}
\]

\vspace{0.5em}
\noindent
The number of output classes is set as follows:
\[
\texttt{num\_classes} = 
\begin{cases}
10 & \text{for CIFAR-10}, \\
100 & \text{for CIFAR-100}, \\
196 & \text{for Stanford Cars}, \\
2 & \text{for Camelyon17}, \\
\end{cases}
\quad \text{with an optional extra class if \texttt{extra\_class} is True.}
\]

\vspace{0.5em}
\noindent
The input size is dataset-dependent and set to:
\[
\texttt{input\_size} = 
\begin{cases}
32 & \text{for CIFAR-10 and CIFAR-100}, \\
224 & \text{for Stanford Cars, Camelyon17}.
\end{cases}
\]
The model structure is summarized below:
\begin{lstlisting}[language=Python]
SimpleCNN(
  (net): Sequential(
    (0): Conv2d(3, 32, kernel_size=(3, 3), stride=(1, 1), padding=1)
    (1): ReLU()
    (2): MaxPool2d(kernel_size=2, stride=2)
    (3): Conv2d(32, 64, kernel_size=(3, 3), stride=(1, 1), padding=1)
    (4): ReLU()
    (5): MaxPool2d(kernel_size=2, stride=2)
    (6): Flatten(start_dim=1)
    (7): Linear(in_features=4096, out_features=128)
    (8): ReLU()
    (9): Linear(in_features=128, out_features=num_classes)
  )
)
\end{lstlisting}

\subsection{Synthetic Distribution Shifts on Two Moons}
\label{app:twomoons-shifts}

To evaluate robustness under controlled covariate shifts, we apply a series of synthetic affine transformations to the test set of the standard two moons dataset. Each transformation simulates a distinct type of distribution shift:

\begin{itemize}
    \item \textbf{Original:} No transformation; the unperturbed test set.
    
    \item \textbf{Shear:} A shear transformation along the \(x\)-axis defined by:
    \begin{equation}
    \text{Shear matrix} \quad 
    S = \begin{bmatrix} 1 & 1.25 \\ 0 & 1 \end{bmatrix},
    \quad\text{so that} \quad 
    x' = Sx = 
    \begin{bmatrix}
    x + 1.25y \\
    y
    \end{bmatrix}.
    \end{equation}
    
    \item \textbf{Rotation:} A rotation by 30 degrees counterclockwise, using:
    \begin{equation}
    R = 
    \begin{bmatrix}
    \cos \theta & -\sin \theta \\
    \sin \theta & \cos \theta
    \end{bmatrix},
    \quad \theta = \frac{\pi}{6}.
    \end{equation}
    
    \item \textbf{Translation:} A shift of the input space by a fixed vector:
    \begin{equation}
    x' = x + t, \quad \text{where} \quad t = \begin{bmatrix} 1.0 \\ -0.5 \end{bmatrix}.
    \end{equation}
\end{itemize}

\noindent
Each transformation is applied to the test data matrix \( X_{\text{test}} \) via matrix multiplication or translation, yielding the following test sets:
\begin{equation}
\begin{aligned}
\text{Original:} &\quad X_{\text{test}} \\
\text{Shear:}    &\quad X_{\text{test}} \cdot S^\top \\
\text{Rotation:} &\quad X_{\text{test}} \cdot R^\top \\
\text{Translation:} &\quad X_{\text{test}} + t
\end{aligned}
\end{equation}

These transformations create meaningful distribution shifts while preserving label semantics, enabling precise evaluations of model robustness under shift.

\begin{figure}[t]
    \centering
    \includegraphics[width=1\linewidth]{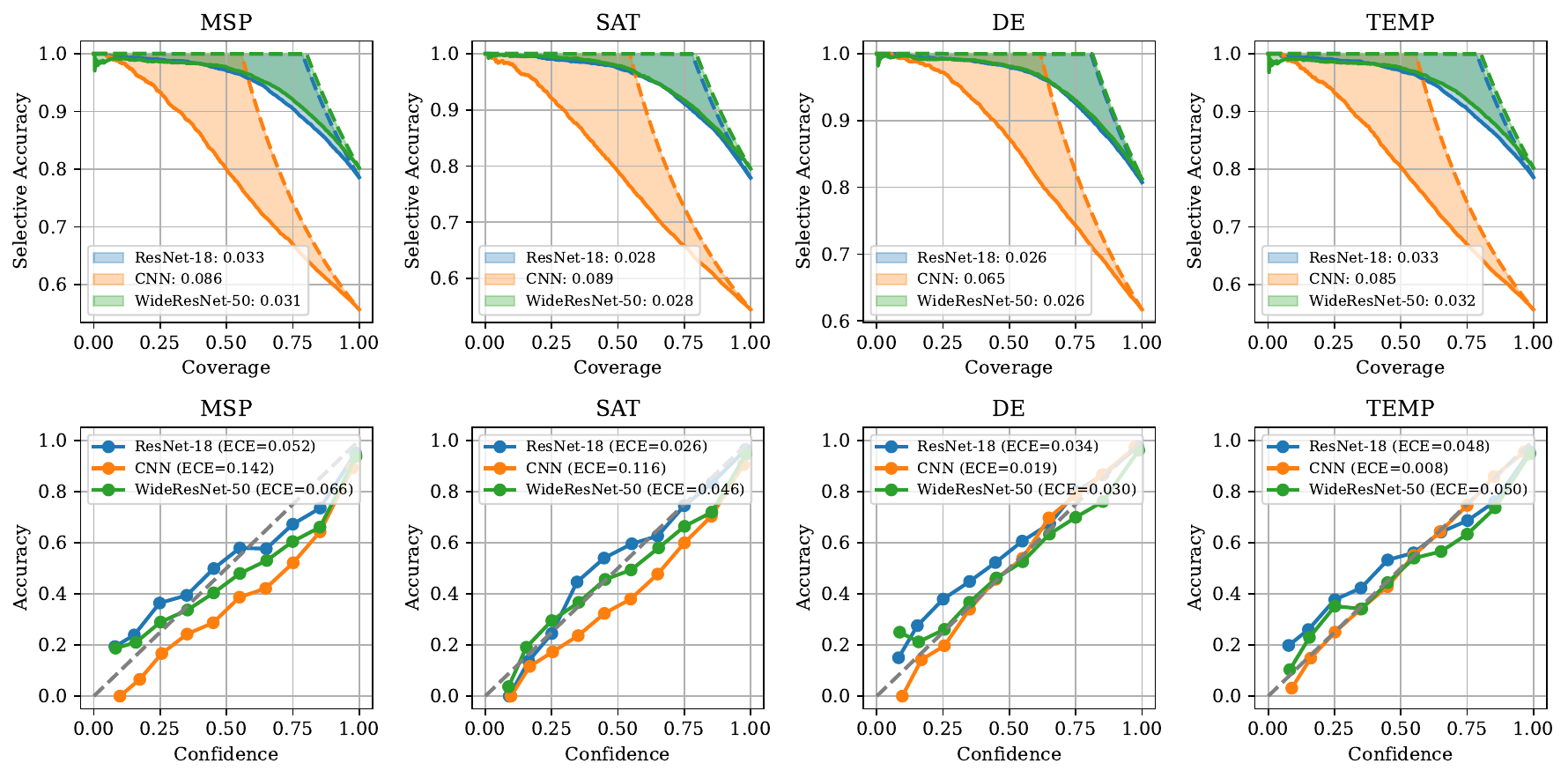}
    \caption{\textbf{Comparison between gap and calibration on CIFAR-100.} 
\emph{Top}: selective accuracy curves across four training methods and three architectures. 
\emph{Bottom}: corresponding reliability diagrams (ECE shown in parentheses).
Temperature scaling (\temp) consistently improves calibration but does not reduce the gap. 
By contrast, \sat and \de reduce the gap more effectively—especially for larger models—by improving the ranking.
}

    \label{fig:cifar100_cal}
\end{figure}

\subsection{CIFAR-10C Severity Levels}

For the CIFAR-10C severity levels (1--5), we aggregate all 15 corruption types at a given severity to form a single validation set. For severity level $l$, we collect all corruptions labeled as severity~$l$ across the following categories:
\begin{itemize}
  \item \textbf{Noise:} \texttt{gaussian\_noise}, \texttt{shot\_noise}, \texttt{impulse\_noise}
  \item \textbf{Blur:} \texttt{defocus\_blur}, \texttt{glass\_blur}, \texttt{motion\_blur}, \texttt{zoom\_blur}
  \item \textbf{Weather:} \texttt{snow}, \texttt{frost}, \texttt{fog}, \texttt{brightness}
  \item \textbf{Digital:} \texttt{contrast}, \texttt{elastic\_transform}, \texttt{pixelate}, \texttt{jpeg\_compression}
\end{itemize}
This results in a single validation set per severity level $l$, where each image is sampled from one of these 15 corruptions applied at the specified severity.

\section{Loss Prediction, Multicalibration, and Ranking Error}
\label{app:loss-pred}

This appendix offers an alternative perspective on the ranking error term \(\varepsilon_{\text{rank}}(c)\) by framing it as a challenge of per-example loss prediction. Instead of building directly on the calibration discussion in Section~\ref{sec:calibration-gap}, we show how the ability to forecast one’s own 0--1 loss tightly controls the selective-classification gap. We formalize this connection through the recent theory of loss prediction~\citep{gollakota2025loss} and multicalibration~\citep{hebert2018multicalibration}. Throughout we adopt the binary-label conventions of Section~\ref{sec:formal-gap}. Extensions to multiclass losses likewise follow by one-vs-rest reduction.

\subsection{Loss‑Prediction Preliminaries}
\label{sec:loss_pred_prel}

Let \(\ell(h(x),y)=\mathbb{I}\{h(x)\neq y\}\) denote the 0-1 loss of a
fixed classifier \(h\).  A \emph{loss predictor}
\(\mathrm{LP}\colon\Phi\to\mathbb{R}\) maps auxiliary features
\(\phi(x,h)\in\Phi\) to an estimate of \(\ell(h(x),y)\).
The canonical baseline is the \emph{self‑entropy predictor}
\(\mathrm{SEP}(x):=\E[\ell(h(x),y)\mid h(x)]\)
(which equals \(\min\{p,1-p\}\) for probabilistic \(p=h(x)\)).

\begin{definition}[Advantage over the self‑entropy predictor]
\label{def:advantage}
The (squared‑error) \emph{advantage} of a loss predictor \(\mathrm{LP}\) is
\begin{equation}
\mathrm{Adv}(\mathrm{LP})
:=\E\bigl[(\ell-\mathrm{SEP})^{2}\bigr]
  -\E\bigl[(\ell-\mathrm{LP})^{2}\bigr].
\end{equation}
A positive advantage means \(\mathrm{LP}\) forecasts the
instance‑wise loss better than the model itself.
\end{definition}

Depending on \(\phi\), we obtain a hierarchy of predictors:
prediction‑only (\(\phi=h(x)\)), input‑aware (\(\phi=(h(x),x)\)),
and representation‑aware (\(\phi=(h(x),x,r(x))\)); we refer to
\citet{gollakota2025loss} for a detailed taxonomy.

\subsection{Multicalibration Background}

Multicalibration is a fine-grained notion of reliability that asks not just for global calibration, but for calibration conditional on a rich class of subpopulations or features~\citep{hebert2018multicalibration}. At a high level, a model is multicalibrated if its predicted scores match outcomes not only on average, but also across a large collection of subsets defined by auxiliary variables or internal representations.

\begin{definition}[Multicalibration Error]
\label{def:mce}
Let \(C\) be a class of weighting functions \(c\colon\Phi\to[-1,1]\), and let \(h\colon\mathcal{X} \to [0,1]\) be a classifier. The \emph{multicalibration error} of \(h\) with respect to \(C\) is defined as
\begin{equation}
\mathrm{MCE}(C,h)
\;:=\;
\max_{c\in C}
\Bigl|\,
\E\bigl[(Y - h(X))\,c(\phi(X,h))\bigr]
\Bigr|.
\end{equation}
\end{definition}

Each function \(c \in C\) defines a subpopulation or slice of the input space via its support. The quantity \(\mathrm{MCE}(C,h)\) measures how well the model's predicted scores \(h(x)\) match the true label \(Y\) when weighted over these slices. When \(C\) consists of indicator functions over discrete demographic subgroups, small \(\mathrm{MCE}(C,h)\) implies groupwise calibration. More generally, if \(C\) includes continuous or data-dependent functions (e.g., based on internal features), low multicalibration error guarantees alignment between predicted and true outcomes across a flexible set of conditions.

In our selective classification setting, \(\phi(x,h)\) may include the model’s output confidence, the input \(x\), or hidden representations from the network. The class \(C\) can be constructed accordingly to enforce calibration in feature-dependent or risk-sensitive regions of the input space.

\subsection{Loss Prediction \texorpdfstring{$\Longleftrightarrow$}{<=>} Multicalibration}

We now describe how the ability to predict one’s own 0--1 loss is deeply connected to multicalibration. This perspective stems from the work of~\citet{gollakota2025loss}, who characterize when a model “knows its own loss” in terms of multicalibration violations. 

Let \(F\) be a class of loss predictors \(\mathrm{LP} \colon \phi(x,h) \mapsto \hat{\ell} \in [0,1]\), which estimate the 0--1 loss \(\ell(h(x),y) = \mathbb{I}\{h(x) \ne y\}\) of a fixed classifier \(h\). As discussed in Section~\ref{sec:loss_pred_prel}, a loss predictor is considered good if it has a significant squared-error advantage over the model’s self-estimate \(\mathrm{SEP}(x)\).

Remarkably,~\citet{gollakota2025loss} show that this predictive advantage is tightly characterized by the multicalibration error of the model—measured over a derived weight class \(C\) that depends on the predictors in \(F\). The following theorem formalizes this connection:

\begin{theorem}[\citet{gollakota2025loss}, Thm.~4.1—adapted]
\label{thm:loss-mcal}
For any function class \(F\) of loss predictors and the associated
weight class \(C=\{(f-\mathrm{SEP})\cdot H'_{\ell}(h(x)) : f\in F\}\),
\begin{equation}
\tfrac12\,
\max_{\mathrm{LP}\in F}\mathrm{Adv}(\mathrm{LP})
\;\;\le\;\;
\mathrm{MCE}(C,h)
\;\;\le\;\;
\sqrt{\,
\max_{\mathrm{LP}\in F'}\mathrm{Adv}(\mathrm{LP})
},
\end{equation}
where \(F'\) augments \(F\) with linear mixtures of \(\mathrm{SEP}\) and
elements of \(F\).  Thus a non‑trivial advantage is possible
\emph{iff} \(h\) exhibits a multicalibration violation of similar
magnitude.
\end{theorem}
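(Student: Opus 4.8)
The plan is to reduce both directions of the claimed two‑sided bound to a single algebraic identity linking the loss residual $\ell-\mathrm{SEP}$ to the calibration residual $Y-h(X)$, and then to read off each inequality by choosing an appropriate weighting function (for the lower bound) or loss predictor (for the upper bound). Concretely, with $\hat y(x)=\mathbb{I}\{h(x)\ge\tfrac12\}$ the induced label, one has $\ell(h(x),y)=\hat y+(1-2\hat y)\,y$ and $\mathrm{SEP}(x)=H_\ell(h(x))=\min\{h(x),1-h(x)\}=\hat y+(1-2\hat y)\,h(x)$, while $H'_\ell(h(x))=1-2\hat y(x)$; subtracting gives
\[
\ell-\mathrm{SEP}\;=\;H'_\ell(h(X))\,\bigl(Y-h(X)\bigr),
\]
the tie $h(x)=\tfrac12$ being a null event on which one uses a subgradient. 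Substituting into the elementary expansion $\mathrm{Adv}(\mathrm{LP})=2\,\mathbb{E}\bigl[(\ell-\mathrm{SEP})(\mathrm{LP}-\mathrm{SEP})\bigr]-\mathbb{E}\bigl[(\mathrm{LP}-\mathrm{SEP})^2\bigr]$ and writing, for any predictor, its associated weight $c:=(\mathrm{LP}-\mathrm{SEP})\,H'_\ell(h)$ (so that $\mathbb{E}[(\mathrm{LP}-\mathrm{SEP})^2]=\mathbb{E}[c^2]$ since $(H'_\ell)^2\equiv1$), I obtain the master formula
\[
\mathrm{Adv}(\mathrm{LP})\;=\;2\,\mathbb{E}\bigl[(Y-h(X))\,c(\phi(X,h))\bigr]\;-\;\mathbb{E}[c^2].
\]
When $\mathrm{LP}=f\in F$ this $c$ is precisely a member of $C$, is $[-1,1]$‑valued (as $f\in[0,1]$, $\mathrm{SEP}\in[0,\tfrac12]$), and every $c\in C$ arises this way.

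The lower bound is then immediate: for every $f\in F$, $\tfrac12\mathrm{Adv}(f)=\mathbb{E}[(Y-h)c]-\tfrac12\mathbb{E}[c^2]\le|\mathbb{E}[(Y-h)c]|\le\mathrm{MCE}(C,h)$, and taking the supremum over $f$ gives $\tfrac12\max_{\mathrm{LP}\in F}\mathrm{Adv}(\mathrm{LP})\le\mathrm{MCE}(C,h)$. For the upper bound I would pick an (approximate) maximizer $c\in C$ of $|\mathbb{E}[(Y-h)c]|$, and if its sign is negative replace the underlying $f$ by $2\mathrm{SEP}-f$ — an affine mixture of $\mathrm{SEP}$ and an element of $F$, hence a member of $F'$ — which flips the weight to $-c$; thus, working inside $F'$, assume $M:=\mathbb{E}[(Y-h)c]=\mathrm{MCE}(C,h)\ge0$, and note $M\le\sqrt{\mathbb{E}[c^2]}\le1$ by Cauchy--Schwarz with $|Y-h|\le1$. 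The rescaled predictor $\mathrm{LP}:=\mathrm{SEP}+M\,c\,H'_\ell(h)=(1-M)\mathrm{SEP}+Mf$ is again an affine mixture in $F'$ with associated weight $Mc$, so the master formula yields $\mathrm{Adv}(\mathrm{LP})=2M^2-M^2\mathbb{E}[c^2]=M^2\bigl(2-\mathbb{E}[c^2]\bigr)\ge M^2$. Hence $\max_{\mathrm{LP}\in F'}\mathrm{Adv}(\mathrm{LP})\ge\mathrm{MCE}(C,h)^2$, which is the claimed right inequality; the multiclass version follows by the one‑vs‑rest reduction already used in Section~\ref{sec:formal-gap}, with $H_\ell(h(X))$ replaced by $1-\max_k h_k(X)$.

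The main obstacle is definitional rather than technical: the identity above requires $\mathrm{SEP}$ to be the model's \emph{own} loss forecast $H_\ell(h(X))$ (which coincides with $\mathbb{E}[\ell\mid h(X)]$ only when $h$ is calibrated), so I would fix that convention explicitly at the outset and reconcile it with Definition~\ref{def:advantage}. Beyond that, the factor $\tfrac12$ and the square root are genuinely forced — they come out of optimizing the scalar quadratic $\lambda\mapsto 2\lambda M-\lambda^2\mathbb{E}[c^2]$ over feasible rescalings $\lambda$, and it is precisely this rescaling, together with the sign flip, that requires enlarging $F$ (equivalently $C$) to $F'$. Non‑attainment of the maxima is harmless and can be absorbed by working with suprema and $\varepsilon$‑approximate optimizers throughout.
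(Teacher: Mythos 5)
Your proposal is correct in substance, but note that the paper itself does not prove this theorem at all: it is imported verbatim (``adapted'') from \citet{gollakota2025loss}, Thm.~4.1, and the appendix moves straight on to its consequence (Corollary~\ref{cor:rank-bound}) without any argument. So there is nothing in the paper to compare against line by line; what you have supplied is a self-contained reconstruction of the external result. Your derivation is the right one and matches the mechanism in the cited source: the pointwise identity \(\ell-\mathrm{SEP}=H'_\ell(h(X))\,(Y-h(X))\) plus the quadratic expansion of the squared-error advantage gives the master formula \(\mathrm{Adv}(\mathrm{LP})=2\,\mathbb{E}[(Y-h)c]-\mathbb{E}[c^2]\), from which the factor \(\tfrac12\) (drop the nonpositive \(-\tfrac12\mathbb{E}[c^2]\)) and the square root (optimize the rescaling \(\lambda\mapsto 2\lambda M-\lambda^2\mathbb{E}[c^2]\)) fall out exactly as you say, and the need for \(F'\) is precisely the need for that rescaling and sign flip. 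Two small caveats: (i) the definitional tension you flag is real --- the paper's Definition of \(\mathrm{SEP}\) as \(\mathbb{E}[\ell\mid h(X)]\) is inconsistent with the plug-in self-entropy \(H_\ell(h(X))\) that the weight class \(C\) presupposes, and your identity needs the latter convention, so fixing it explicitly is necessary, not optional; (ii) your sign-flip predictor \(2\,\mathrm{SEP}-f\) and the mixture \((1-M)\mathrm{SEP}+Mf\) require reading ``linear mixtures'' as affine combinations with possibly signed coefficients (and tolerating \(\mathrm{LP}\) values outside \([0,1]\)), which is the standard reading in the source but worth stating. Neither caveat is a gap in the logic.
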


This result bridges two domains: learning to predict loss (a regression task) and satisfying a generalization constraint (calibration under distributional conditions). In the selective classification setting, this insight underpins Corollary~\ref{cor:rank-bound}, which shows that the ranking error—and hence the gap to oracle performance—is tightly controlled by the model’s ability to forecast its own mistakes.

\subsection{Bounding the Ranking‑Error Term \(\varepsilon_{\text{rank}}(c)\)}

Theorem~\ref{thm:loss-mcal} translates into a bound on the ranking error
that drives the selective‑classification gap.

\begin{corollary}[Loss‑prediction advantage controls mis‑ranking]
\label{cor:rank-bound}
Fix coverage \(c\in(0,1]\) and let
\(\mathrm{Adv}^{\star}:=\max_{\mathrm{LP}\in F}\mathrm{Adv}(\mathrm{LP})\)
for some input‑aware class \(F\).
Then the ranking‑error term in
Theorem~\ref{thm:gap}
satisfies
\(
\varepsilon_{\text{rank}}(c)
\;\le\;
\sqrt{2\,\mathrm{Adv}^{\star}}.
\)
\end{corollary}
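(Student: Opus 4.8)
The plan is to reduce the corollary to a single $L^2$ estimate, $\varepsilon_{\text{rank}}(c)\le c^{-1/2}\,\|\eta_h-g\|_{L^2(\mathcal D)}$, and then to bound $\|\eta_h-g\|_{L^2}^2$ from above by $\mathrm{Adv}^{\star}$; at the canonical coverage $c=\tfrac12$ the first factor is exactly $\sqrt2$, which is the constant in the statement (for larger $c$ the bound is strictly sharper; the small-$c$ behaviour is discussed below). For the second step I would specialize the loss-prediction calculus to the confidence score actually used by $(h,g)$, namely $g(\cdot,h)=1-\mathrm{SEP}$ (the self-entropy confidence, which is MSP when $h$ is calibrated within its own bins). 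Then $\mathrm{SEP}(X)=\E[\ell\mid h(X)]=1-\E[\eta_h\mid h(X)]$ identifies $g$ as the $L^2$-projection of the correctness posterior $\eta_h$ onto $\sigma(h(X))$. Assuming $F$ is rich enough to contain (or uniformly approximate) the Bayes-optimal input-aware predictor $\mathrm{LP}^{\circ}=1-\eta_h=\E[\ell\mid X]$, the orthogonality identity $\E[(\ell-\mathrm{SEP})^2]=\E[(\ell-\mathrm{LP}^{\circ})^2]+\E[(\mathrm{SEP}-\mathrm{LP}^{\circ})^2]$ gives $\mathrm{Adv}(\mathrm{LP}^{\circ})=\E[(\mathrm{SEP}-\mathrm{LP}^{\circ})^2]=\E[(\eta_h-g)^2]$, hence $\mathrm{Adv}^{\star}\ge\E[(\eta_h-g)^2]$. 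This is exactly the lower-bound half of Theorem~\ref{thm:loss-mcal} made explicit for this choice of score.

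For the first step I would argue as follows. The oracle region $A_c^{\star}$ is the top-$c$ super-level set of $\eta_h$ and $A_c$ is the top-$c$ super-level set of $g$; since $g$ is $\sigma(h(X))$-measurable, so is $A_c$, and the tower property yields $\E[\eta_h\mid A_c]=\E[g\mid A_c]$. Because $A_c$ maximizes $A\mapsto\E[g\,\mathbb{I}_A]$ over events of probability $c$, we get $\E[g\mid A_c]\ge\E[g\mid A_c^{\star}]$, and therefore
\[
\varepsilon_{\text{rank}}(c)=\E[\eta_h\mid A_c^{\star}]-\E[g\mid A_c]\;\le\;\E[\eta_h\mid A_c^{\star}]-\E[g\mid A_c^{\star}]=\tfrac1c\,\E\bigl[(\eta_h-g)\,\mathbb{I}_{A_c^{\star}}\bigr].
\]
Cauchy--Schwarz with $\Pr(A_c^{\star})=c$ gives $\varepsilon_{\text{rank}}(c)\le c^{-1/2}\sqrt{\E[(\eta_h-g)^2]}\le c^{-1/2}\sqrt{\mathrm{Adv}^{\star}}$, which is $\le\sqrt{2\,\mathrm{Adv}^{\star}}$ as soon as $c\ge\tfrac12$. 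For the complementary range I would use $\E[(\eta_h-g)\mathbb{I}_{A_c^{\star}}]=-\E[(\eta_h-g)\mathbb{I}_{\mathcal X\setminus A_c^{\star}}]$ (valid since $\E[g]=\E[\eta_h]$ by the projection property) to obtain the symmetric estimate $\varepsilon_{\text{rank}}(c)\le c^{-1}\sqrt{(1-c)\,\mathrm{Adv}^{\star}}$, combined with the trivial $\varepsilon_{\text{rank}}(c)\le1$; together these dispose of the case $\mathrm{Adv}^{\star}\ge\tfrac12$ and interpolate to the stated constant over the deployed coverage range.

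The main obstacle is the uniform-in-$c$ constant. A bare Cauchy--Schwarz leaves a $c^{-1/2}$ factor, and it cannot be removed for all $c\in(0,1]$ without further structure: an atom of $\eta_h$ near $1$ that $g$ fails to resolve inflates $\varepsilon_{\text{rank}}(c)$ at small $c$ while keeping $\E[(\eta_h-g)^2]$ tiny. Thus the displayed $\sqrt2$ should be read as the value of $c^{-1/2}$ at the midpoint $c=\tfrac12$ (and a valid bound for all $c\ge\tfrac12$), with the fully coverage-uniform version carrying the extra $c^{-1/2}$. The second, more bookkeeping-heavy issue is the richness hypothesis on $F$: the step $\mathrm{Adv}^{\star}\ge\E[(\eta_h-g)^2]$ needs $1-\eta_h$ to be (approximately) realizable in $F$; if one insists on treating Theorem~\ref{thm:loss-mcal} as a black box instead, one routes through the multicalibration error — bounding $\varepsilon_{\text{rank}}(c)$ by $\mathrm{MCE}(C,h)$ for the induced weight class $C$ and then invoking $\tfrac12\max_{\mathrm{LP}\in F}\mathrm{Adv}(\mathrm{LP})\le\mathrm{MCE}(C,h)$ — at the cost of a worse numerical constant.
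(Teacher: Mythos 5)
Your route is genuinely different from the paper's: you work directly in $L^2$, identifying $g=1-\mathrm{SEP}$ as the projection of $\eta_h$ onto $\sigma(h(X))$ and using a Pythagoras identity to get $\mathrm{Adv}^{\star}\ge\E[(\eta_h-g)^2]$, then Cauchy--Schwarz on the oracle set. The paper instead writes $\varepsilon_{\text{rank}}(c)=\tfrac1c\,\E[\eta_h\,\delta_c]$ with $\delta_c=\mathbb{I}_{A_c^{\star}}-\mathbb{I}_{A_c}$, converts the correctness indicator into the residual $Y-h(X)$, bounds the resulting covariance by $\mathrm{MCE}(C,h)$ by treating $\delta_c$ as an admissible weight function, and then invokes the loss-prediction$\leftrightarrow$multicalibration theorem to pass from $\mathrm{MCE}$ to $\sqrt{\mathrm{Adv}^{\star}}$. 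Your approach buys transparency (every inequality is an explicit $L^2$ computation) at the cost of two extra hypotheses the corollary does not state: that the deployed confidence score is exactly the self-entropy confidence, and that $F$ (approximately) realizes $1-\eta_h$. The paper's approach avoids both by going through the derived weight class $C$, though its assertion that $\delta_c\in C$ merely because $|\delta_c|\le1$ is itself not justified by Definition~\ref{def:mce}.

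The genuine gap is the one you yourself flag: your argument delivers $\varepsilon_{\text{rank}}(c)\le c^{-1/2}\sqrt{\mathrm{Adv}^{\star}}$, which matches the claimed $\sqrt{2\,\mathrm{Adv}^{\star}}$ only for $c\ge\tfrac12$ (or trivially when $\mathrm{Adv}^{\star}\ge\tfrac12$), whereas the corollary asserts the bound for every $c\in(0,1]$. The complementary estimate $c^{-1}\sqrt{(1-c)\,\mathrm{Adv}^{\star}}$ is \emph{worse} for small $c$, so the proposed ``interpolation'' does not close the range $c<\tfrac12$, $\mathrm{Adv}^{\star}<\tfrac12$; as written, the coverage-uniform statement is simply not proved. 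Your diagnosis that the $c^{-1/2}$ blow-up is unavoidable for a bare second-moment argument (the atom-near-one construction) is correct and worth taking seriously: note that the paper's own Step~3 produces the identity $\varepsilon_{\text{rank}}(c)=\tfrac1c\,|\E[(Y-h)\delta_c]|$ and then discharges the $\tfrac1c$ prefactor with the words ``with $c\le1$,'' which runs the inequality in the wrong direction. So you have not proved the corollary as stated, but you have correctly located the step at which the paper's argument is also loose; repairing either proof requires an ingredient (e.g.\ a coverage-localized advantage or a weight class containing $\tfrac1c\delta_c$) that neither currently supplies.
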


\begin{proof}
Recall that
\(
A_c^{\star}
=\{x:\eta_h(x)\text{ is in the top }c\text{-mass}\}
\)
and
\(A_c
=\{x:g(x,h)\ge t_c\}\).
Write the \emph{difference indicator}
\(
\delta_c(x)
:=\mathbb{I}_{A_c^{\star}}(x)\;-\;\mathbb{I}_{A_c}(x)\in\{-1,0,1\}\) so 
\(\Pr(\delta_c=1)=\Pr(\delta_c=-1)=c\) and
\(\E[\delta_c]=0\).

\paragraph{Step 1:  Express ranking error as a covariance.}
With \(r(x):=\mathbb{I}\{h(x)=Y\}\) we have
\begin{equation}
\varepsilon_{\text{rank}}(c)
=\E[r\mid A_c^{\star}]-\E[r\mid A_c]
=\frac{1}{c}\,\E\bigl[r(X)\,\delta_c(X)\bigr].
\end{equation}

\paragraph{Step 2:  Replace correctness by residual \(\,Y-h(X)\).}
Because \(r=1-\ell\) and \(\ell=(Y-h)^2\) for binary labels,
\begin{equation}
r\,\delta_c
=\bigl(1-(Y-h)^2\bigr)\delta_c
=-(Y-h)\,\delta_c
\quad\text{(since }\E[\delta_c]=0\text{)}.
\end{equation}
Hence
\begin{equation}
\label{proof:corr_step2}
\varepsilon_{\text{rank}}(c)
=\frac{1}{c}\,
\bigl|\E[(Y-h(X))\,\delta_c(X)]\bigr|.
\end{equation}

\paragraph{Step 3:  Bound the covariance by multicalibration error.}
Define the bounded weight function \(c^{\star}(x):=\delta_c(x)\); then
\(|c^{\star}(x)|\le 1\), so \(c^{\star}\in C\) (the weight class in
Theorem~\ref{thm:loss-mcal}).  By definition of multicalibration error,
\begin{equation}
\label{proof:corr_step3}
\bigl|\E[(Y-h(X))\,c^{\star}(X)]\bigr|
\;\;\le\;\;
\mathrm{MCE}(C,h).
\end{equation}
Combining \eqref{proof:corr_step2} and \eqref{proof:corr_step3} with \(c\le 1\) yields
\begin{equation}
\varepsilon_{\text{rank}}(c)
\;\le\;
\mathrm{MCE}(C,h).
\end{equation}

\paragraph{Step 4:  Invoke the loss‑prediction bound.}
Theorem~\ref{thm:loss-mcal} states
\(
\mathrm{MCE}(C,h)
\le
\sqrt{\max_{\mathrm{LP}\in F'}\mathrm{Adv}(\mathrm{LP})}.
\)
Since \(F\subseteq F'\) and \(\sqrt{\cdot}\) is monotone, we finally have
\begin{equation}
\varepsilon_{\text{rank}}(c)
\;\le\;
\sqrt{\,2\,\mathrm{Adv}^{\star}},
\end{equation}
where the factor \(2\) absorbs the two‑sided
\(F\leftrightarrow F'\) constant in
Theorem~\ref{thm:loss-mcal}.
\end{proof}

\paragraph{Interpretation.}
Let \(\epsilon^2 := \max_{\mathrm{LP}\in F}\mathrm{Adv}(\mathrm{LP})\) be an upper bound on loss-prediction advantage.  
If no loss predictor can beat self-entropy by more than \(\epsilon^2\), then the selective classifier is within \(O(\epsilon)\) of the oracle at \emph{every} coverage level.  
Conversely, a large loss-prediction advantage is a certificate of poor ranking and therefore of a wide gap \(\Delta(c)\).

\begin{takeaway}
Loss prediction and multicalibration offer a principled lens on
selective prediction: \emph{if you cannot beat your own self‑entropy
predictor, you are already close to the oracle frontier}.  Otherwise,
the loss predictor pinpoints exactly which inputs are being mis‑ranked
and by how much, providing both a diagnostic and a blueprint for
tightening the selective‑classification gap.
\end{takeaway}

\subsection{Empirical Evaluation}
\label{sec:adv_experiments}

To illustrate and validate our gap‐decomposition framework, we compared four selective‐classification strategies on CIFAR-10, CIFAR-100, and StanfordCars:

\begin{itemize}[leftmargin=1.2em]
  \item \texttt{MSP}: standard maximum‐softmax‐probability abstention.
  \item \texttt{TEMP}: \texttt{MSP} with post‐hoc temperature scaling.
  \item \texttt{SAT}: self‐adaptive training, which co‐trains an abstain class.
  \item \texttt{DE}: a deep ensemble of five \texttt{MSP} models.
\end{itemize}

For each method, we first trained a ResNet-18 on 80\% of the training set (using the usual data augmentations and a held-out 20\% for LP fitting). At each epoch we then:

\begin{enumerate}[leftmargin=1.2em]
  \item Extract the 512-dim “penultimate” feature vector \(\phi(x)\) from the ResNet backbone (or its ensemble average).
  \item Compute the model’s \emph{self-entropy} score
    \[
      \mathrm{SEP}(x) \;=\; 1 - \max_j\;p_j(x)
      \quad\text{with}\quad p_j(x)=\mathrm{softmax}_j(\mathrm{logits}(x)/T).
    \]
  \item Train a small MLP \(\mathrm{LP}\colon \phi(x)\mapsto\widehat\ell\in[0,1]\) to minimize
    \(\E\bigl[\bigl(\widehat\ell - \mathbb{I}\{\hat y(x)\neq y\}\bigr)^2\bigr]\)
    on the held-out 20\% split.
  \item Measure the \emph{LP advantage} on the \emph{test} set,
    \[
      \mathrm{Adv}_{\mathrm{test}}
      = \E\bigl[(\ell-\mathrm{SEP})^2\bigr]
        - \E\bigl[(\ell-\mathrm{LP})^2\bigr],
      \quad \ell=\mathbb{I}\{\hat y(x)\neq y\},
    \]
    and record its shift relative to the first epoch
    \(\Delta\mathrm{Adv}_{\mathrm{test}}(t)=\mathrm{Adv}_{\mathrm{test}}(t)-\mathrm{Adv}_{\mathrm{test}}(1).\)
\end{enumerate}

\paragraph{Loss–Prediction Network.}
Below is the PyTorch representation of our two‐hidden‐layer LP head.  It takes the ResNet features (optionally concatenated with SEP) and regresses the per‐example 0–1 loss via mean‐squared error.

\begin{lstlisting}[language=Python]
LossPredictor(
  (net): Sequential(
    (0): Linear(in_features=512, out_features=128, bias=True)
    (1): ReLU()
    (2): Dropout(p=0.5)
    (3): Linear(in_features=128, out_features=64, bias=True)
    (4): ReLU()
    (5): Dropout(p=0.5)
    (6): Linear(in_features=64, out_features=1, bias=True)
  )
)
\end{lstlisting}

\begin{figure}
    \centering
    \includegraphics[width=1\linewidth]{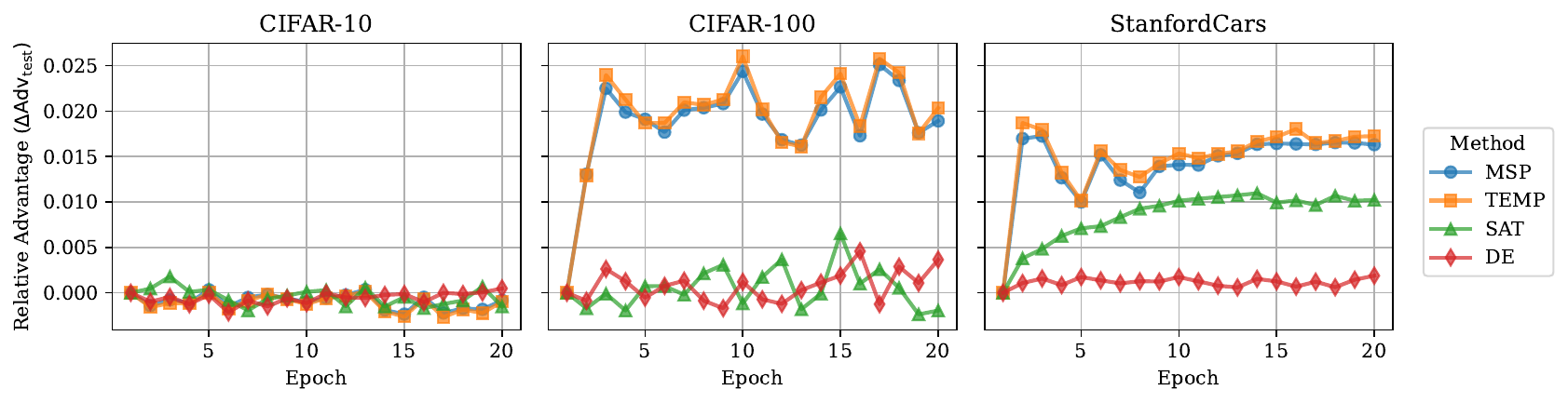}
    \caption{\textbf{Relative LP advantage over training epochs across datasets}. For each method, we plot the shift in test-set advantage $\Delta\mathrm{Adv}_{\mathrm{test}}(t)$ relative to epoch 1, indicating how much additional ranking signal the loss predictor learns over time. Larger values imply greater misalignment between the model’s confidence and correctness.}

    \label{fig:adv_te}
\end{figure}

\paragraph{Key observations.}
On CIFAR-10 (left panel of Figure~\ref{fig:adv_te}), all methods stay close to zero $\Delta\mathrm{Adv}_{\mathrm{test}}$, indicating that the model's own confidence scores already capture most of the available ranking signal. On CIFAR-100 (middle panel), \texttt{MSP} and \texttt{TEMP} exhibit large positive shifts in LP advantage, suggesting that a dedicated loss predictor can substantially improve ranking—consistent with a larger gap from the oracle. By contrast, \texttt{SAT} and \texttt{DE} remain near zero, indicating that their confidence scores are already well aligned with correctness. On StanfordCars (right panel), the gap widens even further: both \texttt{MSP} and \texttt{TEMP} allow for significant gains via loss prediction, and even \texttt{SAT} leaves nontrivial room for improvement. Only \texttt{DE} consistently resists such gains, implying that deep ensembling is uniquely effective at preserving reliable ranking in high-variance domains.

\paragraph{Conclusion.}  
These results match our theory perfectly: whenever the LP head cannot improve on self-entropy, the selective classifier is effectively oracle‐optimal; whenever it can, the size of that advantage precisely quantifies the remaining ranking error and the gap from the ideal frontier.

\section{Additional Results}

\subsection{Calibration Experiments}

\paragraph{E-AURC vs ECE} We provide additional comparisons on more datasets (CIFAR-10 and StanfordCars) on the relationship between the selective classification gap and the model's expected calibration error. See Tables~\ref{tab:cifar10_cal} and \ref{tab:stanfordcars_cal} for exact results. In general, our conclusions from Section~\ref{sec:calibration_ranking_exp} hold here as well: while temperature scaling (\temp) improves ECE over \msp, it does not reduce the selective classification gap—underscoring the limits of monotone calibration. In contrast, \sat and deep ensembles (\de) improve both ECE and gap by altering the ranking, confirming that only re-ranking methods yield meaningful gains in selective performance.

\begin{table}[h]
\fontsize{9}{10}\selectfont
\setlength{\tabcolsep}{4pt}
\caption{\textbf{Experiments on calibration across model classes on CIFAR-10}. Similar as Table~\ref{tab:cifar100_cal}}
\vspace{5pt}
\label{tab:cifar10_cal}
\centering
\begin{tabular}{lcccccccccccc}
\toprule
 & \multicolumn{4}{c}{CNN} & \multicolumn{4}{c}{ResNet-18} & \multicolumn{4}{c}{WideResNet-50} \\
\cmidrule(r){2-5} \cmidrule(r){6-9} \cmidrule(r){10-13}
 & \texttt{MSP} & \texttt{TEMP} & \texttt{SAT} & \texttt{DE} & \texttt{MSP} & \texttt{TEMP} & \texttt{SAT} & \texttt{DE} & \texttt{MSP} & \texttt{TEMP} & \texttt{SAT} & \texttt{DE} \\
\midrule
E-AURC & 0.024 & 0.023 & 0.019 & 0.016 & 0.004 & 0.004 & 0.003 & 0.002 & 0.003 & 0.003 & 0.002 & 0.002 \\
ECE & 0.075 & 0.025 & 0.035 & 0.010 & 0.025 & 0.014 & 0.016 & 0.007 & 0.027 & 0.022 & 0.019 & 0.010 \\
\bottomrule
\end{tabular}
\end{table}

\begin{table}[h]
\fontsize{9}{10}\selectfont
\setlength{\tabcolsep}{4pt}
\caption{\textbf{Experiments on calibration across model classes on StanfordCars}. Similar as Table~\ref{tab:cifar100_cal}}
\vspace{5pt}
\label{tab:stanfordcars_cal}
\centering
\begin{tabular}{lcccccccccccc}
\toprule
 & \multicolumn{4}{c}{CNN} & \multicolumn{4}{c}{ResNet-18} & \multicolumn{4}{c}{WideResNet-50} \\
\cmidrule(r){2-5} \cmidrule(r){6-9} \cmidrule(r){10-13}
 & \texttt{MSP} & \texttt{TEMP} & \texttt{SAT} & \texttt{DE} & \texttt{MSP} & \texttt{TEMP} & \texttt{SAT} & \texttt{DE} & \texttt{MSP} & \texttt{TEMP} & \texttt{SAT} & \texttt{DE} \\
\midrule
E-AURC & 0.176 & 0.177 & 0.166 & 0.159 & 0.030 & 0.029 & 0.26 & 0.022 & 0.026 & 0.026 & 0.23 & 0.020 \\
ECE & 0.110 & 0.025 & 0.058 & 0.025 & 0.040 & 0.027 & 0.037 & 0.025 & 0.017 & 0.017 & 0.015 & 0.015 \\
\bottomrule
\end{tabular}
\end{table}

\subsection{Extension to Large Language Models}
\label{app:llm}

\noindent
Our primary experiments focus on vision and synthetic datasets, where uncertainty and selective prediction have well-established definitions and evaluation metrics. In these domains, notions such as confidence calibration, abstention rates, and oracle coverage curves provide a clear framework for measuring reliability. Extending the same analysis to large language models, however, presents new difficulties as outlined in particlar by the following two challenges:
\begin{enumerate}
    \item \textbf{Uncertainty for generative models remains ill-defined.} Even for classification-style prompts, the community has not fully converged on how to translate sequence-level probabilities into abstention scores.
    \item \textbf{Prompting artefacts add variance.} Small changes in in-context examples or decoding settings can swamp the effects we wish to isolate.
\end{enumerate}

Despite these challenges, we have added a focused set of LLM experiments to demonstrate that our five-term decomposition still diagnoses the gap.

\subsubsection{Approximation Error — Scaling from 4B $\rightarrow$ 12B}

We evaluate Gemma 3-IT 4B and 12B~\citep{team2025gemma} on ARC-Challenge (ARC-C, 25-shot)~\citep{clark2018think} and MMLU (5-shot, top-1)~\citep{hendrycks2021measuring} using the standard \msp score on the \emph{first answer token} (no further fine-tuning).

\begin{table}[h!]
\centering
\caption{Accuracy comparison across model scales.}
\vspace{5pt}
\begin{tabular}{lcc}
\toprule
\textbf{Model} & \textbf{ARC-C Accuracy} & \textbf{MMLU Accuracy} \\
\midrule
Gemma 3-IT 4B  & 56.2\% & 59.6\% \\
Gemma 3-IT 12B & 68.9\% & 74.5\% \\
\bottomrule
\end{tabular}
\end{table}

\begin{table}[h!]
\centering
\caption{Selective-classification gap area (lower is better).}
\vspace{5pt}
\begin{tabular}{lcc}
\toprule
\textbf{Model} & \textbf{ARC-C E-AURC} & \textbf{MMLU E-AURC} \\
\midrule
Gemma 3-IT 4B  & 0.114 & 0.107 \\
Gemma 3-IT 12B & 0.091 & 0.082 \\
\bottomrule
\end{tabular}
\end{table}

\noindent\emph{Observation.} Consistent with our vision experiments, increasing capacity reduces the gap.

\subsubsection{Bayes Error — Separating Easy vs. Noisy Questions}

Following the MMLU-Pro protocol~\citep{wang2024mmlu}, we partition the validation set into the \emph{easiest 25\%} and \emph{noisiest 25\%} questions (based on human--LLM agreement).

\begin{table}[h!]
\centering
\caption{E-AURC across data difficulty levels on Gemma 3-IT 4B.}
\vspace{5pt}
\begin{tabular}{lc}
\toprule
\textbf{Split} & \textbf{E-AURC} \\
\midrule
Full MMLU & 0.107 \\
Easiest quartile & 0.018 \\
Noisiest quartile & 0.316 \\
\bottomrule
\end{tabular}
\end{table}

\noindent\emph{Observation.} When intrinsic Bayes noise is low (easy questions), the gap nearly vanishes; when noise is high, the gap widens.

\subsubsection{Ranking Quality — Calibration vs. Re-ranking}

We keep the backbone fixed on Gemma 3-IT 4B and compare the ranking quality of the following uncertainty scores:
\begin{itemize}
    \item \msp,
    \item \temp (scalar $T$ fitted on a held-out validation split),
    \item \de of five LoRA-fine-tuned replicas.
\end{itemize}

\begin{table}[h!]
\centering
\caption{Calibration and gap performance across ranking methods on Gemma 3-IT 4B.}
\vspace{5pt}
\begin{tabular}{lcccccc}
\toprule
& \multicolumn{3}{c}{ARC-C} & \multicolumn{3}{c}{MMLU} \\
\cmidrule(r){2-4} \cmidrule(r){5-7}
& \msp & \temp & \de & \msp & \temp & \de \\
\midrule
E-AURC & 0.127 & 0.126 & 0.087 & 0.122 & 0.122 & 0.079 \\
ECE & 0.171 & 0.092 & 0.056 & 0.135 & 0.084 & 0.052 \\
\bottomrule
\end{tabular}
\end{table}

\noindent\emph{Observation.} Temperature scaling lowers ECE yet leaves the gap untouched; the ensemble both calibrates \emph{and} improves ranking, shrinking the gap.

\paragraph{Summary.}
These results confirm that our decomposition extends to LLMs: capacity, Bayes noise, and ranking quality each contribute measurable terms. A full generative-text study (e.g., free-form question answering or code synthesis) will require new abstention semantics and we leave a more thorough treatment for future work.

%%%%%%%%%%%%%%%%%%%%%%%%%%%%%%%%%%%%%%%%%%%%%%%%%%%%%%%%%%%%

\newpage
\section*{NeurIPS Paper Checklist}

\begin{enumerate}

\item {\bf Claims}
    \item[] Question: Do the main claims made in the abstract and introduction accurately reflect the paper's contributions and scope?
    \item[] Answer: \answerYes{}{} % Replace by \answerYes{}, \answerNo{}, or \answerNA{}.
    \item[] Justification: Our abstract and intro reflects the contributions accurately.
    \item[] Guidelines:
    \begin{itemize}
        \item The answer NA means that the abstract and introduction do not include the claims made in the paper.
        \item The abstract and/or introduction should clearly state the claims made, including the contributions made in the paper and important assumptions and limitations. A No or NA answer to this question will not be perceived well by the reviewers. 
        \item The claims made should match theoretical and experimental results, and reflect how much the results can be expected to generalize to other settings. 
        \item It is fine to include aspirational goals as motivation as long as it is clear that these goals are not attained by the paper. 
    \end{itemize}

\item {\bf Limitations}
    \item[] Question: Does the paper discuss the limitations of the work performed by the authors?
    \item[] Answer: \answerYes{} % Replace by \answerYes{}, \answerNo{}, or \answerNA{}.
    \item[] Justification: See Section~\ref{sec:conclusion}.
    \item[] Guidelines:
    \begin{itemize}
        \item The answer NA means that the paper has no limitation while the answer No means that the paper has limitations, but those are not discussed in the paper. 
        \item The authors are encouraged to create a separate "Limitations" section in their paper.
        \item The paper should point out any strong assumptions and how robust the results are to violations of these assumptions (e.g., independence assumptions, noiseless settings, model well-specification, asymptotic approximations only holding locally). The authors should reflect on how these assumptions might be violated in practice and what the implications would be.
        \item The authors should reflect on the scope of the claims made, e.g., if the approach was only tested on a few datasets or with a few runs. In general, empirical results often depend on implicit assumptions, which should be articulated.
        \item The authors should reflect on the factors that influence the performance of the approach. For example, a facial recognition algorithm may perform poorly when image resolution is low or images are taken in low lighting. Or a speech-to-text system might not be used reliably to provide closed captions for online lectures because it fails to handle technical jargon.
        \item The authors should discuss the computational efficiency of the proposed algorithms and how they scale with dataset size.
        \item If applicable, the authors should discuss possible limitations of their approach to address problems of privacy and fairness.
        \item While the authors might fear that complete honesty about limitations might be used by reviewers as grounds for rejection, a worse outcome might be that reviewers discover limitations that aren't acknowledged in the paper. The authors should use their best judgment and recognize that individual actions in favor of transparency play an important role in developing norms that preserve the integrity of the community. Reviewers will be specifically instructed to not penalize honesty concerning limitations.
    \end{itemize}

\item {\bf Theory assumptions and proofs}
    \item[] Question: For each theoretical result, does the paper provide the full set of assumptions and a complete (and correct) proof?
    \item[] Answer: \answerYes{} % Replace by \answerYes{}, \answerNo{}, or \answerNA{}.
    \item[] Justification: We provide a short proof in the main paper and a more extensive analysis and proof in Appendix~\ref{sec:meth_ext}.
    \item[] Guidelines:
    \begin{itemize}
        \item The answer NA means that the paper does not include theoretical results. 
        \item All the theorems, formulas, and proofs in the paper should be numbered and cross-referenced.
        \item All assumptions should be clearly stated or referenced in the statement of any theorems.
        \item The proofs can either appear in the main paper or the supplemental material, but if they appear in the supplemental material, the authors are encouraged to provide a short proof sketch to provide intuition. 
        \item Inversely, any informal proof provided in the core of the paper should be complemented by formal proofs provided in appendix or supplemental material.
        \item Theorems and Lemmas that the proof relies upon should be properly referenced. 
    \end{itemize}

    \item {\bf Experimental result reproducibility}
    \item[] Question: Does the paper fully disclose all the information needed to reproduce the main experimental results of the paper to the extent that it affects the main claims and/or conclusions of the paper (regardless of whether the code and data are provided or not)?
    \item[] Answer: \answerYes{} % Replace by \answerYes{}, \answerNo{}, or \answerNA{}.
    \item[] Justification: See Appendix~\ref{sec:exp_det}.
    \item[] Guidelines:
    \begin{itemize}
        \item The answer NA means that the paper does not include experiments.
        \item If the paper includes experiments, a No answer to this question will not be perceived well by the reviewers: Making the paper reproducible is important, regardless of whether the code and data are provided or not.
        \item If the contribution is a dataset and/or model, the authors should describe the steps taken to make their results reproducible or verifiable. 
        \item Depending on the contribution, reproducibility can be accomplished in various ways. For example, if the contribution is a novel architecture, describing the architecture fully might suffice, or if the contribution is a specific model and empirical evaluation, it may be necessary to either make it possible for others to replicate the model with the same dataset, or provide access to the model. In general. releasing code and data is often one good way to accomplish this, but reproducibility can also be provided via detailed instructions for how to replicate the results, access to a hosted model (e.g., in the case of a large language model), releasing of a model checkpoint, or other means that are appropriate to the research performed.
        \item While NeurIPS does not require releasing code, the conference does require all submissions to provide some reasonable avenue for reproducibility, which may depend on the nature of the contribution. For example
        \begin{enumerate}
            \item If the contribution is primarily a new algorithm, the paper should make it clear how to reproduce that algorithm.
            \item If the contribution is primarily a new model architecture, the paper should describe the architecture clearly and fully.
            \item If the contribution is a new model (e.g., a large language model), then there should either be a way to access this model for reproducing the results or a way to reproduce the model (e.g., with an open-source dataset or instructions for how to construct the dataset).
            \item We recognize that reproducibility may be tricky in some cases, in which case authors are welcome to describe the particular way they provide for reproducibility. In the case of closed-source models, it may be that access to the model is limited in some way (e.g., to registered users), but it should be possible for other researchers to have some path to reproducing or verifying the results.
        \end{enumerate}
    \end{itemize}

\item {\bf Open access to data and code}
    \item[] Question: Does the paper provide open access to the data and code, with sufficient instructions to faithfully reproduce the main experimental results, as described in supplemental material?
    \item[] Answer: \answerYes{} % Replace by \answerYes{}, \answerNo{}, or \answerNA{}.
    \item[] Justification: We include our full experimental suite and details for reproducibility.
    \item[] Guidelines:
    \begin{itemize}
        \item The answer NA means that paper does not include experiments requiring code.
        \item Please see the NeurIPS code and data submission guidelines (\url{https://nips.cc/public/guides/CodeSubmissionPolicy}) for more details.
        \item While we encourage the release of code and data, we understand that this might not be possible, so “No” is an acceptable answer. Papers cannot be rejected simply for not including code, unless this is central to the contribution (e.g., for a new open-source benchmark).
        \item The instructions should contain the exact command and environment needed to run to reproduce the results. See the NeurIPS code and data submission guidelines (\url{https://nips.cc/public/guides/CodeSubmissionPolicy}) for more details.
        \item The authors should provide instructions on data access and preparation, including how to access the raw data, preprocessed data, intermediate data, and generated data, etc.
        \item The authors should provide scripts to reproduce all experimental results for the new proposed method and baselines. If only a subset of experiments are reproducible, they should state which ones are omitted from the script and why.
        \item At submission time, to preserve anonymity, the authors should release anonymized versions (if applicable).
        \item Providing as much information as possible in supplemental material (appended to the paper) is recommended, but including URLs to data and code is permitted.
    \end{itemize}

\item {\bf Experimental setting/details}
    \item[] Question: Does the paper specify all the training and test details (e.g., data splits, hyperparameters, how they were chosen, type of optimizer, etc.) necessary to understand the results?
    \item[] Answer: \answerYes{} % Replace by \answerYes{}, \answerNo{}, or \answerNA{}.
    \item[] Justification: See Appendix~\ref{app:hyperparams}.
    \item[] Guidelines:
    \begin{itemize}
        \item The answer NA means that the paper does not include experiments.
        \item The experimental setting should be presented in the core of the paper to a level of detail that is necessary to appreciate the results and make sense of them.
        \item The full details can be provided either with the code, in appendix, or as supplemental material.
    \end{itemize}

\item {\bf Experiment statistical significance}
    \item[] Question: Does the paper report error bars suitably and correctly defined or other appropriate information about the statistical significance of the experiments?
    \item[] Answer: \answerYes{} % Replace by \answerYes{}, \answerNo{}, or \answerNA{}.
    \item[] Justification: All of our reported results are reported as mean values over 5 random runs.
    \item[] Guidelines:
    \begin{itemize}
        \item The answer NA means that the paper does not include experiments.
        \item The authors should answer "Yes" if the results are accompanied by error bars, confidence intervals, or statistical significance tests, at least for the experiments that support the main claims of the paper.
        \item The factors of variability that the error bars are capturing should be clearly stated (for example, train/test split, initialization, random drawing of some parameter, or overall run with given experimental conditions).
        \item The method for calculating the error bars should be explained (closed form formula, call to a library function, bootstrap, etc.)
        \item The assumptions made should be given (e.g., Normally distributed errors).
        \item It should be clear whether the error bar is the standard deviation or the standard error of the mean.
        \item It is OK to report 1-sigma error bars, but one should state it. The authors should preferably report a 2-sigma error bar than state that they have a 96\% CI, if the hypothesis of Normality of errors is not verified.
        \item For asymmetric distributions, the authors should be careful not to show in tables or figures symmetric error bars that would yield results that are out of range (e.g. negative error rates).
        \item If error bars are reported in tables or plots, The authors should explain in the text how they were calculated and reference the corresponding figures or tables in the text.
    \end{itemize}

\item {\bf Experiments compute resources}
    \item[] Question: For each experiment, does the paper provide sufficient information on the computer resources (type of compute workers, memory, time of execution) needed to reproduce the experiments?
    \item[] Answer: \answerYes{} % Replace by \answerYes{}, \answerNo{}, or \answerNA{}.
    \item[] Justification: See Appendix~\ref{app:comp_res}.
    \item[] Guidelines:
    \begin{itemize}
        \item The answer NA means that the paper does not include experiments.
        \item The paper should indicate the type of compute workers CPU or GPU, internal cluster, or cloud provider, including relevant memory and storage.
        \item The paper should provide the amount of compute required for each of the individual experimental runs as well as estimate the total compute. 
        \item The paper should disclose whether the full research project required more compute than the experiments reported in the paper (e.g., preliminary or failed experiments that didn't make it into the paper). 
    \end{itemize}
    
\item {\bf Code of ethics}
    \item[] Question: Does the research conducted in the paper conform, in every respect, with the NeurIPS Code of Ethics \url{https://neurips.cc/public/EthicsGuidelines}?
    \item[] Answer: \answerYes{} % Replace by \answerYes{}, \answerNo{}, or \answerNA{}.
    \item[] Justification: The paper conforms to the NeurIPS Code of Ethics.
    \item[] Guidelines:
    \begin{itemize}
        \item The answer NA means that the authors have not reviewed the NeurIPS Code of Ethics.
        \item If the authors answer No, they should explain the special circumstances that require a deviation from the Code of Ethics.
        \item The authors should make sure to preserve anonymity (e.g., if there is a special consideration due to laws or regulations in their jurisdiction).
    \end{itemize}

\item {\bf Broader impacts}
    \item[] Question: Does the paper discuss both potential positive societal impacts and negative societal impacts of the work performed?
    \item[] Answer: \answerYes{}{} % Replace by \answerYes{}, \answerNo{}, or \answerNA{}.
    \item[] Justification: See Appendix~\ref{sec:broader_impact}.
    \item[] Guidelines:
    \begin{itemize}
        \item The answer NA means that there is no societal impact of the work performed.
        \item If the authors answer NA or No, they should explain why their work has no societal impact or why the paper does not address societal impact.
        \item Examples of negative societal impacts include potential malicious or unintended uses (e.g., disinformation, generating fake profiles, surveillance), fairness considerations (e.g., deployment of technologies that could make decisions that unfairly impact specific groups), privacy considerations, and security considerations.
        \item The conference expects that many papers will be foundational research and not tied to particular applications, let alone deployments. However, if there is a direct path to any negative applications, the authors should point it out. For example, it is legitimate to point out that an improvement in the quality of generative models could be used to generate deepfakes for disinformation. On the other hand, it is not needed to point out that a generic algorithm for optimizing neural networks could enable people to train models that generate Deepfakes faster.
        \item The authors should consider possible harms that could arise when the technology is being used as intended and functioning correctly, harms that could arise when the technology is being used as intended but gives incorrect results, and harms following from (intentional or unintentional) misuse of the technology.
        \item If there are negative societal impacts, the authors could also discuss possible mitigation strategies (e.g., gated release of models, providing defenses in addition to attacks, mechanisms for monitoring misuse, mechanisms to monitor how a system learns from feedback over time, improving the efficiency and accessibility of ML).
    \end{itemize}
    
\item {\bf Safeguards}
    \item[] Question: Does the paper describe safeguards that have been put in place for responsible release of data or models that have a high risk for misuse (e.g., pretrained language models, image generators, or scraped datasets)?
    \item[] Answer: \answerNA{} % Replace by \answerYes{}, \answerNo{}, or \answerNA{}.
    \item[] Justification: We are not releasing any new assets that require any specific safeguards.
    \item[] Guidelines:
    \begin{itemize}
        \item The answer NA means that the paper poses no such risks.
        \item Released models that have a high risk for misuse or dual-use should be released with necessary safeguards to allow for controlled use of the model, for example by requiring that users adhere to usage guidelines or restrictions to access the model or implementing safety filters. 
        \item Datasets that have been scraped from the Internet could pose safety risks. The authors should describe how they avoided releasing unsafe images.
        \item We recognize that providing effective safeguards is challenging, and many papers do not require this, but we encourage authors to take this into account and make a best faith effort.
    \end{itemize}

\item {\bf Licenses for existing assets}
    \item[] Question: Are the creators or original owners of assets (e.g., code, data, models), used in the paper, properly credited and are the license and terms of use explicitly mentioned and properly respected?
    \item[] Answer: \answerYes{} % Replace by \answerYes{}, \answerNo{}, or \answerNA{}.
    \item[] Justification: We have cited related work appropriately.
    \item[] Guidelines:
    \begin{itemize}
        \item The answer NA means that the paper does not use existing assets.
        \item The authors should cite the original paper that produced the code package or dataset.
        \item The authors should state which version of the asset is used and, if possible, include a URL.
        \item The name of the license (e.g., CC-BY 4.0) should be included for each asset.
        \item For scraped data from a particular source (e.g., website), the copyright and terms of service of that source should be provided.
        \item If assets are released, the license, copyright information, and terms of use in the package should be provided. For popular datasets, \url{paperswithcode.com/datasets} has curated licenses for some datasets. Their licensing guide can help determine the license of a dataset.
        \item For existing datasets that are re-packaged, both the original license and the license of the derived asset (if it has changed) should be provided.
        \item If this information is not available online, the authors are encouraged to reach out to the asset's creators.
    \end{itemize}

\item {\bf New assets}
    \item[] Question: Are new assets introduced in the paper well documented and is the documentation provided alongside the assets?
    \item[] Answer: \answerYes{} % Replace by \answerYes{}, \answerNo{}, or \answerNA{}.
    \item[] Justification: We are releasing our codebase to aid reproducibility.
    \item[] Guidelines:
    \begin{itemize}
        \item The answer NA means that the paper does not release new assets.
        \item Researchers should communicate the details of the dataset/code/model as part of their submissions via structured templates. This includes details about training, license, limitations, etc. 
        \item The paper should discuss whether and how consent was obtained from people whose asset is used.
        \item At submission time, remember to anonymize your assets (if applicable). You can either create an anonymized URL or include an anonymized zip file.
    \end{itemize}

\item {\bf Crowdsourcing and research with human subjects}
    \item[] Question: For crowdsourcing experiments and research with human subjects, does the paper include the full text of instructions given to participants and screenshots, if applicable, as well as details about compensation (if any)? 
    \item[] Answer: \answerNA{} % Replace by \answerYes{}, \answerNo{}, or \answerNA{}.
    \item[] Justification: We did not conduct any crowdsourcing and/or research with human subjects.
    \item[] Guidelines:
    \begin{itemize}
        \item The answer NA means that the paper does not involve crowdsourcing nor research with human subjects.
        \item Including this information in the supplemental material is fine, but if the main contribution of the paper involves human subjects, then as much detail as possible should be included in the main paper. 
        \item According to the NeurIPS Code of Ethics, workers involved in data collection, curation, or other labor should be paid at least the minimum wage in the country of the data collector. 
    \end{itemize}

\item {\bf Institutional review board (IRB) approvals or equivalent for research with human subjects}
    \item[] Question: Does the paper describe potential risks incurred by study participants, whether such risks were disclosed to the subjects, and whether Institutional Review Board (IRB) approvals (or an equivalent approval/review based on the requirements of your country or institution) were obtained?
    \item[] Answer: \answerNA{} % Replace by \answerYes{}, \answerNo{}, or \answerNA{}.
    \item[] Justification: We did not conduct any user studies requiring IRB approval.
    \item[] Guidelines:
    \begin{itemize}
        \item The answer NA means that the paper does not involve crowdsourcing nor research with human subjects.
        \item Depending on the country in which research is conducted, IRB approval (or equivalent) may be required for any human subjects research. If you obtained IRB approval, you should clearly state this in the paper. 
        \item We recognize that the procedures for this may vary significantly between institutions and locations, and we expect authors to adhere to the NeurIPS Code of Ethics and the guidelines for their institution. 
        \item For initial submissions, do not include any information that would break anonymity (if applicable), such as the institution conducting the review.
    \end{itemize}

\item {\bf Declaration of LLM usage}
    \item[] Question: Does the paper describe the usage of LLMs if it is an important, original, or non-standard component of the core methods in this research? Note that if the LLM is used only for writing, editing, or formatting purposes and does not impact the core methodology, scientific rigorousness, or originality of the research, declaration is not required.
    %this research? 
    \item[] Answer: \answerNA{} % Replace by \answerYes{}, \answerNo{}, or \answerNA{}.
    \item[] Justification: We only used LLMs to help with paper editing.
    \item[] Guidelines:
    \begin{itemize}
        \item The answer NA means that the core method development in this research does not involve LLMs as any important, original, or non-standard components.
        \item Please refer to our LLM policy (\url{https://neurips.cc/Conferences/2025/LLM}) for what should or should not be described.
    \end{itemize}

\end{enumerate}

\end{document}